\newtheorem{thm}{Theorem}
\newcommand{\diag}{\mathrm{diag}}
\newcommand{\N}{\mathbb{N}}
\def\eqref#1{equation~\ref{#1}}
\def\1{\bm{1}}
\def\vtheta{{\bm{\theta}}}
\def\vc{{\bm{c}}}
\def\vh{{\bm{h}}}
\def\vo{{\bm{o}}}
\def\vx{{\bm{x}}}
\def\vy{{\bm{y}}}
\def\mD{{\bm{D}}}
\def\mF{{\bm{F}}}
\def\mI{{\bm{I}}}
\def\mJ{{\bm{J}}}
\def\mM{{\bm{M}}}
\def\mP{{\bm{P}}}
\def\mQ{{\bm{Q}}}
\def\mS{{\bm{S}}}
\def\mU{{\bm{U}}}
\def\mW{{\bm{W}}}
\def\mX{{\bm{X}}}
\def\mPhi{{\bm{\Phi}}}
\def\mLambda{{\bm{\Lambda}}}
\DeclareMathAlphabet{\mathsfit}{\encodingdefault}{\sfdefault}{m}{sl}
\SetMathAlphabet{\mathsfit}{bold}{\encodingdefault}{\sfdefault}{bx}{n}
\def\gG{{\mathcal{G}}}
\newcommand{\R}{\mathbb{R}}
\def\mPhi{{\bm{\Phi}}}
\def\mPsi{{\bm{\Psi}}}
\def\mTheta{{\bm{\Theta}}}
\newcommand\citep[1]{\cite{#1}}
\newcommand\citet[1]{\cite{#1}}
\def\BibTeX{{\rm B\kern-.05em{\sc i\kern-.025em b}\kern-.08em
    T\kern-.1667em\lower.7ex\hbox{E}\kern-.125emX}}
\begin{document}
\title{Learnable Filters for Geometric Scattering Modules} %

\author{Alexander~Tong\textsuperscript{*}, %
        Frederik Wenkel\textsuperscript{*},
        Dhananjay Bhaskar,
        Kincaid Macdonald,
        Jackson Grady,
        Michael Perlmutter,
        Smita Krishnaswamy\textsuperscript{†}, %
        and~Guy Wolf\textsuperscript{†}%
\thanks{\textsuperscript{*} denotes equal contribution and \textsuperscript{†} equal senior author contribution. A. Tong is with the Dept. of Computer Science and Operations Research and F. Wenkel and G. Wolf are with the Dept. of Mathematics and Statistics at Universit\'{e} de Montr\'{e}al. M. Perlmutter is with the Dept. of Mathematics at University of California Los Angeles. K. Macdonald is with the Dept. of Mathematics, J. Grady is with the Dept. of Computer Science, D. Bhaskar is with the Dept. of Genetics, and S. Krishnaswamy is with the Depts. of Genetics and Computer Science at Yale University, New Haven, CT, USA. A. Tong, F. Wenkel, and G. Wolf are also affiliated with Mila -- the Quebec AI Institute, Montreal, QC, Canada.}%
\thanks{ Correspondence to: Smita Krishnaswamy~\textless{}smita.krishnaswamy@yale.edu\textgreater{}, 333 Ceder Street, New Haven, CT 06520, United States}%
}%

\maketitle
\begin{abstract}
We propose a new graph neural network (GNN) module, based on relaxations of recently proposed geometric scattering transforms, which consist of a cascade of graph wavelet filters. Our learnable geometric scattering (LEGS) module enables adaptive tuning of the wavelets to encourage band-pass features to emerge in learned representations. The incorporation of our LEGS-module in GNNs enables the learning of longer-range graph relations compared to many popular GNNs, which often rely on encoding graph structure via smoothness or similarity between neighbors. Further, its wavelet priors result in simplified architectures with significantly fewer learned parameters compared to competing GNNs. We demonstrate the predictive performance of LEGS-based networks on graph classification benchmarks, as well as the descriptive quality of their learned features in biochemical graph data exploration tasks. Our results show that LEGS-based networks match or outperforms popular GNNs, as well as the original geometric scattering construction, on many datasets, in particular in biochemical domains, while retaining certain mathematical properties of handcrafted (non-learned) geometric scattering.

\end{abstract}

\begin{IEEEkeywords}
Geometric Scattering, Graph Neural Networks, Graph Signal Processing
\end{IEEEkeywords}

\section{Introduction}\label{sec: introduction}
\IEEEPARstart{G}{eometric} deep learning has recently emerged as an increasingly prominent branch of deep learning~\citep{bronstein_geometric_2017}. At the core of geometric deep learning is the use of graph neural networks (GNNs) in general, and graph convolutional networks (GCNs) in particular, which ensure neuron activations follow the geometric organization of input data by propagating information across graph neighborhoods~\citep{bruna_spectral_2014, defferrard_convolutional_2016, kipf_semi-supervised_2016, hamilton_inductive_2017, xu_how_2019, abu-el-haija_mixhop_2019-1}. However, recent work has shown the difficulty in generalizing these methods to more complex structures, identifying common problems and phrasing them in terms of so-called oversmoothing~\citep{li2018deeper}, underreaching~\citep{barcelo2020logical} and oversquashing~\citep{alon2020bottleneck}.

Using graph signal processing terminology from \citet{kipf_semi-supervised_2016}, these issues can be partly attributed to the limited construction of convolutional filters in many commonly used GCN architectures. Inspired by the filters learned in convolutional neural networks, GCNs consider node features as graph signals and aim to aggregate information from neighboring nodes. For example, \citet{kipf_semi-supervised_2016} presented a typical implementation of a GCN with a cascade of averaging (essentially low pass) filters. We note that more general variations of GCN architectures exist~\citep{defferrard_convolutional_2016,hamilton_inductive_2017, xu_how_2019}, which are capable of representing other filters, but as investigated in~\citet{alon2020bottleneck}, they often have difficulty in learning long-range connections. 

Recently, an alternative approach was presented to provide deep geometric representation learning by generalizing Mallat's scattering transform \citep{mallat_group_2012}, originally proposed to provide a mathematical framework for understanding convolutional neural networks, to graphs~\citep{gao2019geometric, gama2019diffusion, zou_graph_2019} and manifolds~\citep{perlmutter_geometric_2018,mcewen2021scattering,chew2022}. 
The geometric scattering transform can represent nodes or graphs based on multi-scale diffusions, and differences between scales of diffusions of graph signals (i.e., node features). 
Similar to traditional scattering, which can be seen as a convolutional network with non-learned wavelet filters, geometric scattering is defined as a GNN with handcrafted graph filters, constructed with diffusion wavelets over the input graph~\citep{coifman_diffusion_2006-1}, which are then cascaded with pointwise absolute-value nonlinearities.
The efficacy of geometric scattering features in graph processing tasks was demonstrated in~\citet{gao2019geometric}, with both supervised learning and data exploration applications. Moreover, their handcrafted design enables rigorous study of their properties, such as stability to deformations and perturbations, and provides a clear understanding of the information extracted by them, which by design (e.g., the cascaded band-pass filters) goes beyond low frequencies to consider richer notions of regularity~\citep{gama_stability_2019, perlmutter2019understanding}. 

However, while geometric scattering transforms provide effective universal feature extractors, their handcrafted design does not allow the automatic task-driven representation learning that is so successful in traditional GNNs and neural networks in general. Here, we combine both frameworks by incorporating richer multi-frequency band features from geometric scattering into GNNs, while allowing them to be flexible and trainable. We introduce the geometric scattering module, which can be used within a larger neural network.
We call this a {\em learnable geometric scattering (LEGS) module} and show it inherits properties from the scattering transform while allowing the scales of the diffusion to be learned. Moreover, we show that our framework is differentiable, allowing for backpropagation through it in a standard reverse mode auto differentiation library. 

The benefits of our construction over standard GNNs, as well as pure geometric scattering, are discussed and demonstrated on graph classification and regression tasks in Sec.~\ref{sect_results}. In particular, we find that our network maintains the robustness to small training sets present in geometric scattering while improving classification on biological graph classification and regression tasks, in particular, in tasks where the graphs have a large diameter relative to their size, learnable scattering features improve performance over competing methods. We show that our construction performs better on tasks that require whole-graph representations with an emphasis on biochemical molecular graphs, where relatively large diameters and non-planar structures usually limit the effectiveness of traditional GNNs. We also show that our network maintains performance in social network and other node classification tasks where state-of-the-art GNNs perform well.

A previous short version of this work appeared in the IEEE Workshop on Machine Learning and Signal Processing 2021~\cite{tong_data-driven_2021}. We expand on that work first by incorporating additional theory including Theorem \ref{thm: random scales} which generalizes existing theory for nonexpansive scattering operators. Furthermore, we add additional experiments on molecular data, as well as ablation studies on both amount of training data and ensembling with other models.

The remainder of this paper is organized as follows. In Section~\ref{sec:related}, we review related work on graph scattering and literature on the challenges of modern GNNs. In Section~\ref{sect_geometric scattering}, we review some of the concepts of geometric scattering. In Section~\ref{sect_theory}, we present expanded theory on geometric scattering with task-driven tuning. This theory establishes that our LEGS module retains the theoretical properties of scattering while increasing expressiveness. In Section~\ref{sect_architecture}, we present the architecture and implementation details of the LEGS module. We examine the empirical performance of LEGS architectures in Section~\ref{sect_results} and conclude in Section~\ref{sec:conclusion}.

\section{Related Work}\label{sec:related}

A widely discussed challenge for many modern GNN approaches is so-called oversmoothing~\citep{li2018deeper,balcilar2020analyzing}. This is a result of the classic message passing in GNNs that is based on cascades of local node feature aggregations over node neighborhoods.
This increasingly smooths graph signals, which in turn renders the graph nodes undistinguishable. From a spectral point of view, this phenomenon is due to most GNN filters being low-pass filters~\citep{nt2019revisiting} that mostly preserve the low-frequency spectrum. A related phenomenon is so-called underreaching~\citep{barcelo2020logical}, which is a result of the limited spatial support of most GNN architectures. Most models can only relate information from nodes within a distance equal to the number of layers. Hence, they cannot represent long-range interactions as the before mentioned oversmoothing typically prohibits the design of truly ``deep'' GNN architectures. Lastly, oversquashing~\citep{alon2020bottleneck} is yet another consequence of typical message passing. As the number of nodes in the receptive field of each node grows exponentially in the number of GNN layers, a huge amount of information needs to be compressed into a vector of fixed size. This makes it difficult to represent meaningful relationships between nodes, as the contribution of single nodes becomes marginal. 

We note that efforts to improve the capture of long-range connections in graph representation learning have recently yielded several spectral approaches based on using the Lancoz algorithm to approximate graph spectra~\cite{liao_lanczosnet_2019}, or based on learning in block Krylov subspaces~\cite{luan_break_2019}. Such methods are complementary to the work presented here, in that their spectral approximation can also be applied in the computation of geometric scattering when considering very long range scales (e.g., via spectral formulation of graph wavelet filters). However, we find that such approximations are not necessary in the datasets considered here and in other recent work focusing on whole-graph tasks, where direct computation of polynomials of the Laplacian is sufficient. Furthermore, recent attempts have also considered ensemble approaches with hybrid architectures that combine GCN and scattering channels~\citep{min2020scattering}, albeit primarily focused on node-level tasks, considered on a single graph at a time, rather than whole-graph tasks considered here on datasets comparing multiple graphs. Such ensemble approaches are also complimentary to the proposed approach in that hybrid architectures can also be applied in conjunction with the proposed LEGS module here as we demonstrate in Sec.~\ref{sect_results}. 

\section{Preliminaries: Geometric Scattering}\label{sect_geometric scattering} %

Let $\gG = (V,E,w)$ be a weighted graph with $V\coloneqq \{v_1,\dots,v_n\}$ the set of nodes, $E\subset \{\{v_i, v_j\}\in V\times V , i\neq j\}$ the set of (undirected) edges and $w : E \to (0,\infty)$ assigning (positive) edge weights to the graph edges.
We define a \textit{graph signal} as a function $x: V \rightarrow \R$ on the nodes of $\gG$ and aggregate them in a signal vector $\vx\in \R^n$ with the $i^{th}$ entry being $x(v_i)$.
We define the \textit{weighted adjacency matrix} $\mW\in\R^{n\times n}$ of $\gG$ as
    $\mW[v_i,v_j] \coloneqq
    w(v_i,v_j) \text{ if } \{v_i,v_j\}\in E, 
    \text{ and } 0 \text{ otherwise}$
and the \textit{degree matrix} $\mD\in\R^{n\times n}$ of $\gG$ as $\mD\coloneqq \diag(d_1,\dots, d_n)$ with $d_i\coloneqq \deg(v_i)\coloneqq \sum_{j=1}^n \mW[v_i,v_j]$ the \textit{degree} of node $v_i$.

The geometric scattering transform~\citep{gao2019geometric} consists of a cascade of graph filters constructed from a left stochastic diffusion matrix $\mP \coloneqq \frac{1}{2} \big( \mI_n + \mW \mD^{-1} \big)$, which corresponds to transition probabilities of a lazy random walk Markov process. The laziness of the process signifies that at each step it has equal probability of staying at the current node or transitioning to a neighbor.
Scattering filters are defined via graph-wavelet matrices $\mPsi_j\in\R^{n\times n}$ of order $j\in\N_0$, as
\begin{align}\label{eq_wavelet matrix}
    \mPsi_0 &\coloneqq \mI_n - \mP, \nonumber \\
    \mPsi_j &\coloneqq \mP^{2^{j-1}} - \mP^{2^j} = \mP^{2^{j-1}} \big( \mI_n - \mP^{2^{j-1}} \big), \quad j\geq 1.
\end{align}
These diffusion wavelet operators partition the frequency spectrum into dyadic frequency bands, which are then organized into a full wavelet filter bank $\mathcal{W}_J\coloneqq\{\mPsi_j, \mPhi_J\}_{0\leq j\leq J}$, where $\mPhi_J\coloneqq \mP^{2^J}$ is a pure low-pass filter, similar to the one used in GCNs. It is easy to verify that the resulting wavelet transform is invertible, since a simple sum of filter matrices in $\mathcal{W}_J$ yields the identity. Moreover, as discussed in~\citet{perlmutter2019understanding}, this filter bank forms a nonexpansive frame, which provides energy preservation guarantees, as well as stability to perturbations, and can be generalized to a wider family of constructions that encompasses the variations of scattering transforms on graphs from such as those considered in~\citet{gama2019diffusion}.

Given the wavelet filter bank $\mathcal{W}_J$, node-level scattering features are computed by stacking cascades of bandpass filters and element-wise absolute value nonlinearities to form 
\begin{equation}\label{eq_scattering (node) features}
     \mU_p \vx \coloneqq \mPsi_{j_m} \vert \mPsi_{j_{m-1}} \dots \vert \mPsi_{j_2} \vert \mPsi_{j_1}\vx\vert \vert \dots \vert,
\end{equation}
parameterized by the scattering path $p \coloneqq (j_1, \dots, j_m)\in \cup_{m \in \N} \N_0^{m}$ that determines the filter scales of each wavelet. Whole-graph representations are obtained by aggregating node-level features via statistical moments over the nodes of the graph~\citep{gao2019geometric}, which yields the geometric scattering moments
\begin{equation}\label{eq_scattering (graph) featrues}
    \mS_{p,q} \vx \coloneqq \sum_{i=1}^n \vert \mU_p \vx [v_i] \vert^q,
\end{equation}
indexed by the scattering path $p$ and moment order $q$. Finally, we note that Theorem 3.6 of \cite{perlmutter2019understanding} shows that $\mU_p$ is equivariant to permutations of the nodes. Therefore, it follows that the graph-level scattering transform $\mS_{p,q}$ is node-permutation invariant since it is defined via global summation.

\section{Adaptive Geom. Scattering Relaxation}\label{sect_theory} %

The geometric scattering construction, described in Sec.~\ref{sect_geometric scattering}, can be seen as a particular GNN architecture with handcrafted layers, rather than learned ones. This provides a solid mathematical framework for understanding the encoding of geometric information in GNNs~\citep{perlmutter2019understanding}, while also providing effective unsupervised graph representation learning for data exploration, which also has advantages in supervised learning tasks~\citep{gao2019geometric}. 

Both
\citet{perlmutter2019understanding} and \citet{gao2019geometric} used dyadic scales in Eq.~\ref{eq_wavelet matrix}, a choice inspired by the Euclidean scattering transform \cite{mallat_group_2012}. Below in Theorem \ref{thm:frame}, we will show that these dyadic scales may be replaced by \emph{any} increasing sequence of scales and the resulting wavelets will still form a nonexpansive frame. Later in Section \ref{sect_architecture}, Theorem \ref{thm: random scales} will consider scales which are learned from data and show that the learned filter bank forms a nonexpansive operator under mild assumptions. This allows us to obtain a flexible model with similar guarantees to the model considered in  \cite{gao2019geometric},
but which is amenable to task-driven tuning provided by end-to-end GNN training. %

Given an increasing sequence of integer diffusion time scales $0 < t_1 < \cdots < t_J$, we replace the wavelets considered in Eq.~\ref{eq_wavelet matrix} with the  generalized filter bank $\mathcal{W}_J^\prime \coloneqq\{\mPsi_j^\prime, \mPhi_J^\prime\}_{j=0}^{J-1}$, where
\begin{align}\label{eq_adaptive wavelet matrices}
    \mPsi_0^\prime &\coloneqq  \mI_n - \mP^{t_1}, \quad \mPhi_J^\prime \coloneqq \mP^{t_J},%
     \\
    \mPsi_j^\prime &\coloneqq \mP^{t_j} - \mP^{t_{j+1}}, \quad 1 \leq j \leq J-1. \nonumber
\end{align}
To study these wavelets, it will be convenient to consider a weighted inner product space $L^2(\gG,\mM)$ of graph signals with inner product
$
    \langle \vx, \vy \rangle \coloneqq \langle \mD^{-\frac{1}{2}}\vx, \mD^{-\frac{1}{2}}\vy \rangle
$
and induced norm
$
    {\Vert \vx \Vert}_{\mD^{-\frac{1}{2}}}^2 = {\Vert \mD^{-\frac{1}{2}} \vx \Vert}_2 = \sum_{i=1}^n\frac{\vx[i]^2}{d_i}
$
for $\vx,\vy\in L^2(\gG,\mD^{-\frac{1}{2}})$.

The following theorem shows that  $\mathcal{W}_J^\prime$ is a nonexpansive frame, similar to the result shown for dyadic scales in~\citet{perlmutter2019understanding}.

\begin{thm}\label{thm:frame}
There exists a constant $C > 0$ that only depends on $t_1$ and $t_J$ such that for all $\vx\in L^2(\gG,\mD^{-1/2})$,
$$
    C \Vert \vx \Vert_{\mD^{-\frac{1}{2}}}^2 \leqslant \Vert \mPhi_J^\prime \vx \Vert_{\mD^{-\frac{1}{2}}}^2 + \sum_{j=0}^J\Vert \mPsi_j^\prime \vx \Vert_{\mD^{-\frac{1}{2}}}^2 \leqslant \Vert \vx \Vert_{\mD^{-\frac{1}{2}}}^2 ,
$$
where the norm is the one induced by the space $L^2(\gG,\mD^{-1/2})$.
\end{thm}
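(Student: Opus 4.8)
The plan is to diagonalize every filter simultaneously and reduce the two-sided bound to a scalar inequality on the spectrum of $\mP$. First I would observe that $\mP = \frac{1}{2}(\mI_n + \mW\mD^{-1})$ is self-adjoint on $L^2(\gG,\mD^{-1/2})$: conjugating by $\mD^{1/2}$ gives $\mD^{-1/2}\mP\mD^{1/2} = \frac{1}{2}(\mI_n + \mD^{-1/2}\mW\mD^{-1/2})$, which is symmetric because $\mW$ is, and this matrix is exactly the representation of $\mP$ in the weighted inner product. Since the symmetric normalized adjacency $\mD^{-1/2}\mW\mD^{-1/2}$ has spectrum in $[-1,1]$, the laziness of the walk forces all eigenvalues $\lambda$ of $\mP$ into $[0,1]$. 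By the spectral theorem there is an orthonormal (in $L^2(\gG,\mD^{-1/2})$) eigenbasis, and since each $\mPsi_j^\prime$ and $\mPhi_J^\prime$ is a polynomial in $\mP$, they are diagonalized in the same basis, acting on the $\lambda$-eigenspace by the scalar symbols $1 - \lambda^{t_1}$, $\lambda^{t_j} - \lambda^{t_{j+1}}$, and $\lambda^{t_J}$ respectively.

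Expanding $\vx$ in this eigenbasis, both sides of the claimed inequality become the same weighted sum of the squared coefficients, so it suffices to prove, for every $\lambda\in[0,1]$, the pointwise bound
$$
C \le (1-\lambda^{t_1})^2 + \sum_{j=1}^{J-1}(\lambda^{t_j}-\lambda^{t_{j+1}})^2 + \lambda^{2t_J} \le 1.
$$
The key structural fact is that, because $\lambda\in[0,1]$ and the scales are strictly increasing, all of these symbols are nonnegative and they telescope to sum to $1$ (the same identity that makes the filter bank invertible).

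For the upper bound I would use that, for nonnegative reals summing to $1$, the sum of squares is at most the square of the sum, hence at most $1$; this is precisely where monotonicity of the scales and $\lambda\le1$ enter, since they guarantee nonnegativity of each symbol. For the lower bound, rather than tracking all $J$ terms, I would discard the intermediate wavelet contributions (all nonnegative) and retain only the first band and the low-pass term, giving
$$
(1-\lambda^{t_1})^2 + \sum_{j=1}^{J-1}(\lambda^{t_j}-\lambda^{t_{j+1}})^2 + \lambda^{2t_J} \ge (1-\lambda^{t_1})^2 + \lambda^{2t_J} \eqqcolon g(\lambda).
$$
The function $g$ depends only on $t_1$ and $t_J$, and it cannot vanish on $[0,1]$: the term $(1-\lambda^{t_1})^2$ vanishes only at $\lambda=1$, while $\lambda^{2t_J}$ vanishes only at $\lambda=0$. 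Since $g$ is continuous on the compact interval $[0,1]$, it attains a strictly positive minimum $C \coloneqq \min_{\lambda\in[0,1]} g(\lambda) > 0$, and this $C$ depends only on $t_1$ and $t_J$, as required.

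The main obstacle I anticipate is not any single inequality but getting the spectral setup exactly right: verifying self-adjointness of $\mP$ (and hence of every filter) in the weighted space $L^2(\gG,\mD^{-1/2})$ rather than the standard one, and confirming the eigenvalues lie in $[0,1]$ so that the nonnegativity and monotonicity of the symbols are available. Once that reduction is in place, the upper bound is the elementary nonnegative-sum inequality, and the lower bound follows from compactness after discarding the intermediate bands, which is exactly the step that makes $C$ depend only on the extreme scales $t_1$ and $t_J$.
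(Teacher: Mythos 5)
Your proposal is correct and follows essentially the same route as the paper's proof: diagonalizing all filters simultaneously via the symmetric conjugate $\mD^{-1/2}\mP\mD^{1/2}$, reducing the frame bounds to a pointwise inequality in the eigenvalues $\lambda\in[0,1]$, obtaining the upper bound from the sum-of-squares of nonnegative telescoping terms being at most the square of their sum (which equals $1$), and obtaining the lower bound by discarding the intermediate bands and taking $C=\min_{\xi\in[0,1]}\bigl(\xi^{2t_J}+(1-\xi^{t_1})^2\bigr)>0$. Your added justifications (why the spectrum of $\mP$ lies in $[0,1]$, and why the minimum defining $C$ is strictly positive via compactness and the absence of a common zero) are details the paper leaves implicit, but the argument is the same.
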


\begin{proof}
Note that $\mP$ has a symmetric conjugate %
$\mM\coloneqq\mD^{-1/2} \mP\mD^{1/2}$ with eigendecomposition $\mM = \mQ \mLambda \mQ^T$ for orthogonal $\mQ$.
Given this decomposition, we can write
\begin{align}
    \mPhi_J^\prime &= \mD^{1/2} \mQ \mLambda^{t_J} \mQ^T \mD^{-1/2}, \nonumber \\
    \mPsi_j^\prime &= \mD^{1/2} \mQ (\mLambda^{t_j} - \mLambda^{t_{j+1}}) \mQ^T \mD^{-1/2}, \quad 0 \leq j \leq J-1, \nonumber
\end{align}
where we set $t_0 = 0$ to simplify notations. Therefore, we have
\begin{equation*}
\|\mPhi_J^\prime \vx \|_{\mD^{-1/2}}^2 = \langle \mPhi_J^\prime \vx, \mPhi_J^\prime \vx \rangle_{\mD^{-1/2}} = \|\mLambda^{t_J} \mQ^T \mD^{-1/2} \vx\|_2^2.
\end{equation*}
If we consider a change of variable to $\vy = \mQ^T \mD^{-1/2} \vx$, we have $\|\vx\|_{\mD^{-1/2}}^2 = \|\mD^{-1/2}\vx\|_2^2 = \|\vy\|_2^2$, while $\|\mPhi_J^\prime \vx \|_{\mD^{-1/2}}^2 = \|\Lambda^{t_J}\vy\|_2^2$. Similarly, we can also reformulate the operations of the other filters in terms of diagonal matrices applied to $\vy$ as $\|\mPsi_j^\prime \vx \|_{\mD^{-1/2}}^2 = \|(\Lambda^{t_j} - \Lambda^{t_{j+1}})\vy\|_2^2$. 

Given these reformulations, we can now write
\begin{multline*}
    \|\Lambda^{t_J}\vy\|_2^2 + \sum_{j=0}^{J-1}\|(\Lambda^{t_j} - \Lambda^{t_{j+1}})\vy\|_2^2 = \\
    \sum_{i=1}^n \vy_i^2 \cdot \left(\lambda_i^{2 t_J} + \sum\nolimits_{j=0}^{J-1}(\lambda_i^{t_j} - \lambda_i^{t_{j+1}})^2\right).
\end{multline*}
Since $0 \leq \lambda_i \leq 1$ and $0 = t_0 < t_1 < \cdots < t_J$, we have
$$
\lambda_i^{2 t_J} + \sum_{j=0}^{J-1}(\lambda_i^{t_j} - \lambda_i^{t_{j+1}})^2 \leq \left(\lambda_i^{t_J} + \sum_{j=0}^{J-1}\lambda_i^{t_j} - \lambda_i^{t_{j+1}}\right)^2
\leq 1,
$$
which yields the upper bound $\|\Lambda^{t_J}\vy\|_2^2 + \sum_{j=0}^{J-1}\|(\Lambda^{t_j} - \Lambda^{t_{j+1}})\vy\|_2^2 \leq \|\vy\|_2^2$.
On the other hand, since $t_1 > 0 = t_0$,
$$
\lambda_i^{2 t_J} + \sum_{j=0}^{J-1}(\lambda_i^{t_j} - \lambda_i^{t_{j+1}})^2 \geq \lambda_i^{2 t_J} + (1 - \lambda_i^{t_1})^2,
$$ 
and thus, setting $C \coloneqq \min_{0 \leq \xi \leq 1} (\xi^{2 t_J} + (1 - \xi^{t_1})^2) > 0$,
we get the lower bound $\|\Lambda^{t_J}\vy\|_2^2 + \sum_{j=0}^{J-1}\|(\Lambda^{t_j} - \Lambda^{t_{j+1}})\vy\|_2^2 \geq C \|\vy\|_2^2$. Applying the reverse change of variable to $\vx$ and $L^2(\gG,\mD^{-1/2})$ yields the result of the theorem.
\end{proof}

Theorem \ref{thm:frame} shows that the wavelet transform is injective and stable to additive noise.
Our next theorem %
shows that it is permutation equivariant at the node-level and permutation invariant at the graph level. This guarantees that the extracted  features  encode the intrinsic  geometry of the graph rather than a priori indexation.
\begin{thm}\label{thm:permutation}
Let $\mU_p^\prime$ and $\mS_{p,q}^\prime$ be defined as in Eq.~\ref{eq_scattering (node) features}-\ref{eq_scattering (graph) featrues}, with the filters from $\mathcal{W}_J^\prime$ with an arbitrary configuration $0 < t_1 < \cdots < t_J$ in place of $\mathcal{W}_J$. Then, for any permutation $\Pi$ over the nodes and any graph signal $\vx\in L^2(\gG,\mD^{-1/2})$, we have $
\mU_p^\prime \Pi \vx = \Pi \mU_p^\prime \vx$ and $\mS_{p,q}^\prime \Pi \vx = \mS_{p,q}^\prime \vx$, for $p \in \cup_{m \in \N} \N_0^{m}, q \in \N$, where geometric scattering implicitly considers the node ordering supporting its input signal.
\end{thm}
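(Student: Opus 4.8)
The plan is to exploit two structural features of the construction: every operator in the generalized bank $\mathcal{W}_J^\prime$ is built solely as a difference of powers of the single diffusion matrix $\mP$, and the only nonlinearity in the cascade is the entrywise absolute value. The first thing I would do is make the statement precise. Since $\mU_p^\prime$ and $\mS_{p,q}^\prime$ are defined relative to $\gG$, applying a node permutation $\Pi$ (with underlying index permutation $\pi$, so that $\Pi$ is an orthogonal matrix with $\Pi^T\Pi = \mI_n$) implicitly relabels the graph as well. Under such a relabeling the weighted adjacency and degree matrices transform by conjugation, $\mW\mapsto\Pi\mW\Pi^T$ and $\mD\mapsto\Pi\mD\Pi^T$, and because $\Pi^T\Pi=\mI_n$ the diffusion matrix inherits the same law, $\mP\mapsto\Pi\mP\Pi^T$. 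Hence every power satisfies $\mP^{t}\mapsto\Pi\mP^{t}\Pi^T$, so each generalized wavelet obeys $\mPsi_j^\prime\mapsto\Pi\mPsi_j^\prime\Pi^T$ and $\mPhi_J^\prime\mapsto\Pi\mPhi_J^\prime\Pi^T$.

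Next I would isolate the two elementary facts that drive the argument. The first is that a conjugated wavelet acting on a permuted signal collapses cleanly, $(\Pi\mPsi_j^\prime\Pi^T)(\Pi\vx)=\Pi\mPsi_j^\prime\vx$, because the inner $\Pi^T\Pi$ cancels. The second is that the entrywise absolute value commutes with any permutation matrix, $\lvert\Pi\vy\rvert=\Pi\lvert\vy\rvert$, since reordering coordinates and taking absolute values in either order yield the same vector. Together these say that a single ``layer'' of the cascade --- apply a wavelet, then take absolute values --- intertwines with $\Pi$.

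I would then prove node-level equivariance $\mU_p^\prime\Pi\vx=\Pi\mU_p^\prime\vx$ by induction on the length $m$ of the path $p=(j_1,\dots,j_m)$. The base case $m=1$ is exactly the two facts above applied once. For the inductive step I would write $\mU_{(j_1,\dots,j_m)}^\prime\Pi\vx=\mPsi_{j_m}^\prime\lvert\mU_{(j_1,\dots,j_{m-1})}^\prime\Pi\vx\rvert$, substitute the induction hypothesis for the inner cascade, and push $\Pi$ out through the absolute value and then through the final conjugated wavelet. Graph-level invariance follows at once:
\[
\mS_{p,q}^\prime\Pi\vx=\sum_{i=1}^n\big\lvert(\Pi\mU_p^\prime\vx)[v_i]\big\rvert^q=\sum_{k=1}^n\big\lvert(\mU_p^\prime\vx)[v_k]\big\rvert^q=\mS_{p,q}^\prime\vx,
\]
where the middle equality is merely a reindexing $k=\pi^{-1}(i)$ of a finite sum, whose value is unaffected by reordering its summands.

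The calculations are routine; the only genuinely delicate point --- which I would state carefully at the outset --- is the bookkeeping of the implicit graph dependence. One must be explicit that ``$\mU_p^\prime\Pi\vx$'' denotes the operator of the relabeled graph acting on the relabeled signal, so that the conjugations $\Pi(\cdot)\Pi^T$ are the correct transformation law rather than an unjustified assumption. Once this is pinned down, the proof is a direct induction that is manifestly independent of the particular scales $0<t_1<\cdots<t_J$; this is precisely why the result carries over verbatim from the dyadic construction of Theorem~3.6 of \cite{perlmutter2019understanding} to the generalized bank $\mathcal{W}_J^\prime$.
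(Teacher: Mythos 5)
Your proof is correct and follows essentially the same route as the paper's: conjugation of the diffusion matrix (hence of each wavelet $\mPsi_j^\prime$) under relabeling, commutation of the entrywise absolute value with permutations, induction along the scattering path, and reindexing of the final sum. Your explicit treatment of the implicit graph dependence ($\mW\mapsto\Pi\mW\Pi^T$, $\mD\mapsto\Pi\mD\Pi^T$, so that $\mU_p^\prime\Pi\vx$ means the relabeled graph's operator acting on the relabeled signal) is exactly the bookkeeping the paper carries out via its $\overline{\gG}$, $\overline{\mU}_p^\prime$, $\overline{\mS}_{p,q}^\prime$ notation.
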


\begin{proof}
For any permutation $\Pi$, we let $\overline{\gG} = \Pi(\gG)$ be the graph obtained by permuting the vertices of $\gG$ with $\Pi$. The corresponding permutation operation on a graph signal $\vx \in L^2(\gG,\mD^{-1/2})$ gives a signal $\Pi \vx \in L^2(\overline{\gG}, \mD^{-1/2})$, which we implicitly considered in the statement of the theorem, without specifying these notations for simplicity. Rewriting the statement of the theorem more rigorously with the introduced notations, we aim to show that $\overline{\mU}_p^\prime \Pi \vx = \Pi \mU_p^\prime \vx$ and $\overline{\mS}_{p,q}^\prime \Pi \vx = \mS_{p,q}^\prime \vx$ under suitable conditions, where the operation $\mU_p^\prime$ from $\gG$ on the permuted graph $\overline{\gG}$ is denoted here by $\overline{\mU}_p^\prime$ and likewise for $\mS_{p,q}^\prime$ we have $\overline{\mS}_{p,q}^\prime$. 

We start by showing $\mU_p^\prime$ is permutation equivariant. First, we notice that for any $\Psi_j$, $0 < j < J$ we have that $\overline{\Psi}_j \Pi \vx = \Pi \Psi_j \vx$, as for $1 \le j \le J - 1$,
$$
    \overline{\mPsi}_j \Pi \vx 
    = (\Pi \mP^{t_j} \Pi^T - \Pi\mP^{t_{j+1}}\Pi^T) \Pi \vx
    = \Pi \mPsi_j \vx,
$$
with similar reasoning for $j\in \{0, J\}$. Note that the element-wise absolute value yields $\vert \Pi \vx \vert = \Pi \vert \vx \vert$ for any permutation matrix $\Pi$. These two observations inductively yield 
\begin{align*}
    \overline{\mU}_p^\prime \Pi\vx =& \overline{\mPsi}_{j_m}^\prime \vert \overline{\mPsi}_{j_{m-1}}^\prime \dots \vert \overline{\mPsi}_{j_2}^\prime \vert \overline{\mPsi}_{j_1}^\prime \Pi\vx\vert \vert \dots \vert \\
    =&  \overline{\mPsi}_{j_m}^\prime \vert \overline{\mPsi}_{j_{m-1}}^\prime \dots \vert \overline{\mPsi}_{j_2}^\prime \Pi \vert \mPsi_{j_1}^\prime \vx\vert \vert \dots \vert =\dots= \Pi \mU_p^\prime \vx. 
\end{align*}
To show $\mS_{p,q}^\prime$ is permutation invariant, first notice that for any statistical moment $q > 0$, we have  $\vert \Pi \vx \vert^q = \Pi \vert \vx \vert^q$ and further, as sums are commutative, $\sum_j (\Pi \vx)_j = \sum_j \vx_j$. We then have
$$
    \overline{\mS}_{p,q}^\prime \Pi \vx = \sum_{i=1}^n \vert \overline{\mU}_p^\prime \Pi \vx [v_i] \vert^q = \sum_{i=1}^n \vert \Pi \mU_p^\prime \vx [v_i] \vert^q
    = \mS_{p,q}^\prime \vx,
$$
which
completes the proof of the theorem.
\end{proof}

We note that the results in Theorems~\ref{thm:frame}-\ref{thm:permutation} and their proofs closely follow the theoretical framework proposed by~\citet{perlmutter2019understanding}. We carefully account here for the relaxed learned configuration, which replaces the original handcrafted one there.

\begin{figure*}[ht]
    \begin{center}
    \includegraphics[width=1\linewidth]{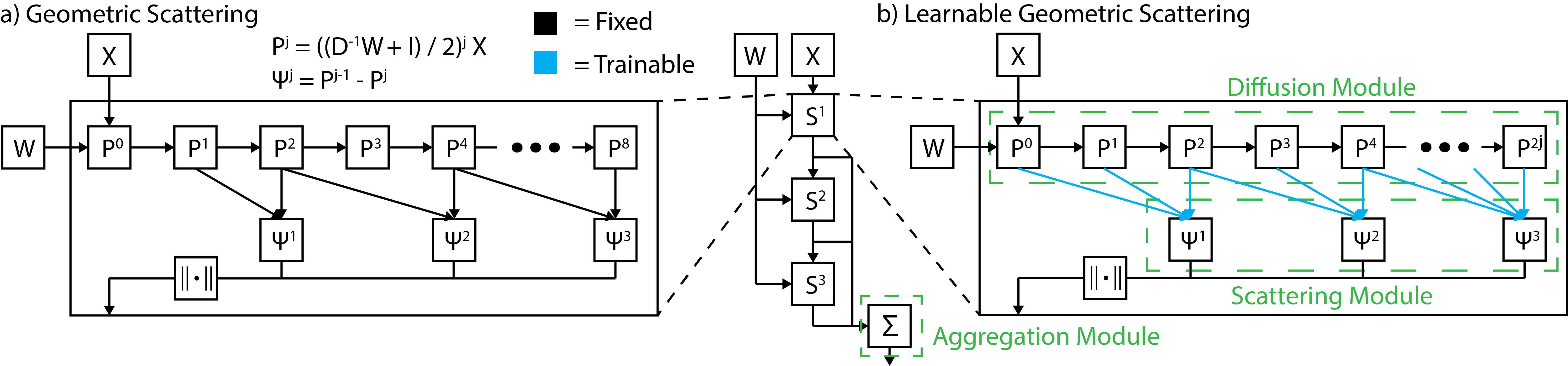}
    \caption{LEGS module learns to select the appropriate scattering scales from the data.}
    \end{center}
    \label{fig:arch}
\end{figure*}
\section{A Learnable Geom. Scattering Module}\label{sect_architecture} %

In this section, we show how the generalized geometric scattering construction presented in Sec.~\ref{sect_theory} can be implemented in a data-driven way via a backpropagation-trainable module. Throughout this section, we consider an input graph signal $\vx\in\R^n$ or, equivalently, a collection of graph signals $\mX\in\R^{n\times N_{\ell-1}}$. The forward propagation of these signals can be divided into three major submodules. First, a {\em diffusion submodule} implements the Markov process that forms the basis of the filter bank and transform. Then, a {\em scattering submodule} implements the filters and the corresponding cascade, while allowing the learning of the scales $t_1,\ldots,t_J$. Finally, the {\em aggregation module} collects the extracted features to provide a graph and produces the task-dependent output.

\vspace{2pt}\noindent\textbf{The diffusion submodule.} We build a set of $m\in\N$ subsequent diffusion steps of the signal $\vx$ by iteratively multiplying the diffusion matrix $\mP$ to the left of the signal, resulting in
$
    \left[ \mP \vx, \mP^2 \vx, \dots, \mP^m \vx \right].
$
Since $\mP$ is often sparse, for efficiency reasons these filter responses are implemented via an RNN structure consisting of $m$ RNN modules. Each module propagates the incoming hidden state $\vh_{t-1}, t = 1, \ldots, m$ with $\mP$ with the readout $\vo_t$ equal to the produced hidden state,
$
    \vh_t \coloneqq \mP \vh_{t-1}, \quad \vo_t \coloneqq \vh_t.
$

\vspace{2pt}\noindent\textbf{The scattering submodule.} Next, we consider the selection of $J\leq m$ diffusion scales for the flexible filter bank construction with wavelets defined according to~Eq.~\ref{eq_wavelet matrix relaxed}. We found this was the most influential part of the architecture. We experimented with methods of increasing flexibility:
\begin{enumerate*}
    \item Selection of $\{t_j\}_{j=1}^{J-1}$ as dyadic scales (as in Sec.~\ref{sect_geometric scattering} and Eq.~\ref{eq_wavelet matrix}), fixed for all datasets (LEGS-FIXED); and
    \item Selection of each $t_j$ using softmax and sorting by $j$, learnable per model (LEGS-FCN and LEGS-RBF, depending on output layer explained below).
\end{enumerate*}

\noindent For the scale selection, we use a selection matrix $\mF\in\R^{J\times m}$, where each row $\mF_{(j,\cdot)}, j = 1,\ldots,J$ is dedicated to identifying the diffusion scale of the wavelet $\mP^{t_j}$ via a one-hot encoding. This is achieved by setting $\mF
    \coloneqq \sigma(\mTheta) = [\sigma(\vtheta_1), \sigma(\vtheta_2), \ldots, \sigma(\vtheta_J)]^T,$
where $\vtheta_j \in\R^m$ constitute the rows of the trainable weight matrix $\mTheta$, and $\sigma$ is the softmax function. While this construction may not strictly guarantee an exact one-hot encoding, we assume that the softmax activations yield a sufficient approximation. Further, without loss of generality, we assume that the rows of $\mF$ are ordered according to the position of the leading ``one'' activated in every row. In practice, this can be easily enforced by reordering the rows. We now construct the filter bank $\widetilde{\mathcal{W}}_{\mF} \coloneqq\{\widetilde\mPsi_{j}, \widetilde\mPhi_{J}\}_{j=0}^{J-1}$ with the filters
\begin{align}\label{eq_wavelet matrix relaxed}
    \widetilde{\mPsi}_{0} \vx &= \vx - \sum_{t=1}^m \mF_{(1,t)} \mP^t \vx, \quad \widetilde{\mPhi}_{J} \vx = \sum_{t=1}^m \mF_{(J,t)} \mP^t \vx, \\
    \widetilde{\mPsi}_{j} \vx &= \sum_{t=1}^m \left[\mF_{(j,t)} \mP^t \vx - \mF_{(j+1,t)} \mP^t \vx \right], \quad 1 \leq j \leq J-1, \nonumber
\end{align}
matching and implementing the construction of $\mathcal{W}_J^\prime$ (Eq.~\ref{eq_adaptive wavelet matrices}).
The following theorem shows that $\widetilde{\mathcal{W}}_{\mF} \coloneqq\{\widetilde\mPsi_{j}, \widetilde\mPhi_{J}\}_{j=0}^{J-1}$ is a nonexpansive operator under the assumption that the rows of $\mathbf{F}$  have disjoint support. Since softmax only leads to approximately sparse rows, this condition will in general only hold approximately. However, it can be easily achieved via post-processing.
\begin{thm}\label{thm: random scales}
Suppose that the rows of $\mathbf{F}$ have disjoint support and that every element of $\text{supp}(\mathbf{F}_{j,\cdot})$ is less than every element of $\text{supp}(\mathbf{F}_{j+1,\cdot})$ for all $1\leq j\leq J-1$. Then $\widetilde{\mathbf{W}}_\mathbf{F}$ is a nonexpansive operator, i.e.
\begin{equation*}
    \sum_{j=0}^J \|\mathbf{\widetilde{\Psi}}_j{\mathbf{x}}\Vert_{\mD^{-\frac{1}{2}}}^2+\|\mathbf{\widetilde{\Phi}}_J \mathbf{x}\Vert_{\mD^{-\frac{1}{2}}}^2\leq \|\mathbf{x}\Vert_{\mD^{-\frac{1}{2}}}^2
\end{equation*}
\end{thm}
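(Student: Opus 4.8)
The plan is to mirror the spectral argument from the proof of Theorem~\ref{thm:frame}, replacing the exact diffusion powers $\mP^{t_j}$ by the convex combinations $\sum_t \mF_{(j,t)}\mP^t$ that now define the learned filters. First I would conjugate by $\mD^{1/2}$ and diagonalize: writing $\mM = \mD^{-1/2}\mP\mD^{1/2} = \mQ\mLambda\mQ^T$ and substituting $\vy = \mQ^T\mD^{-1/2}\vx$, each filter becomes a diagonal operator acting on $\vy$, so that $\|\vx\|_{\mD^{-\frac{1}{2}}}^2 = \|\vy\|_2^2$ and every term $\|\widetilde\mPsi_j\vx\|_{\mD^{-\frac{1}{2}}}^2$ and $\|\widetilde\mPhi_J\vx\|_{\mD^{-\frac{1}{2}}}^2$ becomes a sum over eigenvalues of $\vy_i^2$ times a squared scalar ``frequency response.'' Introducing $a_j(\lambda) \coloneqq \sum_{t=1}^m \mF_{(j,t)}\lambda^t$, the responses are $1 - a_1(\lambda)$ for $\widetilde\mPsi_0$, then $a_j(\lambda) - a_{j+1}(\lambda)$ for $\widetilde\mPsi_j$ with $1\le j\le J-1$, and $a_J(\lambda)$ for $\widetilde\mPhi_J$. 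The claim then reduces to the pointwise inequality, for every eigenvalue $\lambda = \lambda_i \in [0,1]$ of $\mM$,
\begin{equation*}
(1 - a_1(\lambda))^2 + \sum_{j=1}^{J-1}(a_j(\lambda) - a_{j+1}(\lambda))^2 + a_J(\lambda)^2 \le 1.
\end{equation*}

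The crux of the argument, and the only place the support hypothesis is used, is establishing the monotone chain $1 \ge a_1(\lambda) \ge a_2(\lambda) \ge \cdots \ge a_J(\lambda) \ge 0$ for all $\lambda \in [0,1]$. Because each row of $\mF$ is a softmax output, $\sum_t \mF_{(j,t)} = 1$, so $a_j(\lambda)$ is a convex combination of the values $\{\lambda^t : t \in \text{supp}(\mF_{(j,\cdot)})\}$; since $\lambda \in [0,1]$ these lie in $[0,1]$, giving $a_1(\lambda)\le 1$ and $a_J(\lambda)\ge 0$ immediately. For the interior inequalities $a_j \ge a_{j+1}$, I would invoke the ordering hypothesis: setting $t_j^{\max} = \max\text{supp}(\mF_{(j,\cdot)})$ and $t_{j+1}^{\min} = \min\text{supp}(\mF_{(j+1,\cdot)})$, the fact that $t\mapsto \lambda^t$ is nonincreasing on $[0,1]$ yields $a_j(\lambda) \ge \lambda^{t_j^{\max}}$ and $a_{j+1}(\lambda) \le \lambda^{t_{j+1}^{\min}}$, while the assumption $t_j^{\max} < t_{j+1}^{\min}$ forces $\lambda^{t_j^{\max}} \ge \lambda^{t_{j+1}^{\min}}$, chaining to $a_j(\lambda)\ge a_{j+1}(\lambda)$. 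Here the disjointness guarantees the supports partition cleanly and the ordering condition is exactly what makes the two envelope bounds compatible.

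Given the monotone chain, I would finish with the same telescoping trick as in Theorem~\ref{thm:frame}: set $b_0 = 1 - a_1$, $b_j = a_j - a_{j+1}$ for $1\le j\le J-1$, and $b_J = a_J$. Monotonicity makes every $b_j \ge 0$, and the sum telescopes to $\sum_{j=0}^J b_j = 1$; for nonnegative numbers summing to one, $\sum_j b_j^2 \le (\sum_j b_j)^2 = 1$, which is precisely the required pointwise bound. Multiplying by $\vy_i^2$, summing over eigenvalues, and undoing the change of variables then recovers $\sum_{j=0}^{J-1}\|\widetilde\mPsi_j\vx\|_{\mD^{-\frac{1}{2}}}^2 + \|\widetilde\mPhi_J\vx\|_{\mD^{-\frac{1}{2}}}^2 \le \|\vx\|_{\mD^{-\frac{1}{2}}}^2$. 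I expect the monotonicity lemma to be the main obstacle: it is the sole step requiring the combinatorial structure of $\mF$, and the convex-combination envelope bounds via $t_j^{\max}$ and $t_{j+1}^{\min}$ are the genuine content of the proof, whereas the spectral reduction and the closing sum-of-squares estimate are verbatim adaptations of the frame argument. Note also that, unlike Theorem~\ref{thm:frame}, only the upper (nonexpansive) bound is claimed, so no lower frame constant $C$ need be produced.
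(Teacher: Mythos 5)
Your proof is correct, but it follows a genuinely different route from the paper's. The paper exploits the fact that each softmax row of $\mF$ sums to one by interpreting $\mF_{j,\cdot}$ as a probability distribution: it defines independent random scales $\tau_j \sim \mF_{j,\cdot}$, writes each learned filter as an expectation, e.g.\ $\widetilde{\mPsi}_j \vx = \mathbb{E}\bigl[\mP^{\tau_j}\vx - \mP^{\tau_{j+1}}\vx\bigr]$, moves the squared norms inside the expectation via Jensen's inequality, and then observes that the support-ordering hypothesis forces $\tau_j < \tau_{j+1}$ almost surely, so that Theorem~\ref{thm:frame} applies pointwise to every realization of the random scales; taking expectations finishes the argument. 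You instead re-run the spectral argument of Theorem~\ref{thm:frame} from scratch with averaged frequency responses $a_j(\lambda) = \sum_t \mF_{(j,t)}\lambda^t$, prove the monotone chain $1 \ge a_1(\lambda) \ge \cdots \ge a_J(\lambda) \ge 0$ via the envelope bounds $a_j(\lambda) \ge \lambda^{t_j^{\max}}$ and $a_{j+1}(\lambda) \le \lambda^{t_{j+1}^{\min}}$ (this is where the ordering hypothesis enters, playing the same role as the almost-sure ordering of the $\tau_j$ in the paper), and close with the same telescoping sum-of-squares estimate. The paper's approach buys brevity and modularity --- Theorem~\ref{thm:frame} is reused as a black box, and the averaging-then-ordering trick would extend to any convex-combination filter bank whose random scales are almost surely increasing --- whereas your approach buys a self-contained, probability-free argument that exposes the structural reason the result holds: the learned filters' frequency responses still form a monotone Littlewood--Paley-type partition of unity, so the nonexpansiveness is visible eigenvalue by eigenvalue. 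One cosmetic point: the summation range $\sum_{j=0}^{J}$ in the theorem statement is an indexing artifact (the filter bank only contains $\widetilde{\mPsi}_0,\dots,\widetilde{\mPsi}_{J-1}$ plus $\widetilde{\mPhi}_J$); your version with $\sum_{j=0}^{J-1}$ matches the actual construction, as does the paper's proof.
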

The proof of Theorem \ref{thm: random scales} relies on applying Jensen's inequality and then Theorem \ref{thm:frame}. We provide full details in the Section \ref{sec: auxilliary proofs} of the appendix. %

\vspace{2pt}\noindent\textbf{The aggregation submodule.}
While many approaches may be applied to aggregate node-level features into graph-level features such as max, mean, sum pooling, or the more powerful TopK~\citep{gao_graph_2019-1} and attention pooling~\citep{velickovic_graph_2018}, we follow the statistical-moment aggregation explained in Secs.~\ref{sect_geometric scattering}-\ref{sect_theory} motivated by \citet{gao2019geometric,perlmutter2019understanding} and leave exploration of other pooling methods to future work.

\subsection{Incorporating LEGS into a larger neural network} 

As shown in~\citet{gao2019geometric} on graph classification, this aggregation works particularly well in conjunction with support vector machines (SVMs) based on the radial basis function (RBF) kernel. Here, we consider two configurations for the task-dependent output layer of the network, either using two fully connected layers after the learnable scattering layers, which we denote LEGS-FCN, or using a modified RBF network~\citep{broomhead_radial_1988}, which we denote LEGS-RBF, to produce the final classification. 

The latter configuration more accurately processes scattering features as shown in Table~\ref{tab:full_table}. Our RBF network works by first initializing a fixed number of movable anchor points. Then, for every point, new features are calculated based on the radial distances to these anchor points. In previous work on radial basis networks these anchor points were initialized independent of the data. We found that this led to training issues if the range of the data was not similar to the initialization of the centers. Instead, we first use a batch normalization layer to constrain the scale of the features and then pick anchors randomly from the initial features of the first pass through our data. This gives an RBF-kernel network with anchors that are always in the range of the data. Our RBF layer is then $\text{RBF}(\vx) = \phi(\| \text{BatchNorm}(\vx) - \vc\|)$ with $\phi(\vx) = e^{-\|\vx\|^2}$.

\subsection{Backpropagation through the LEGS module}
The LEGS module is fully suitable for incorporation in any neural network architecture and can be backpropagated through. To show this we write the partial derivatives with respect to the LEGS module parameters $\mTheta$ and $\mW$ here.
The gradients of the filter $\widetilde{\mPsi}_j, j\geq 1,$
with respect to scale weights $\theta_k$, $k = 1, \ldots m$, where $\sigma$ is the softmax function, can be written as
\begin{align}
    \frac{\partial \widetilde{\mPsi}_j}{\partial \mTheta_{k, t}} = 
    \begin{cases}
    \mP^t \sigma(\vtheta_j)_t(1-\sigma(\vtheta_j))_t & \text{ if } k = j, \\
    \mP^t \sigma(\vtheta_j)_t \sigma(\vtheta_k)_t & \text{ if } k = j + 1, \\
    \boldsymbol{0}_{n\times n} & \text{ else.}
    \end{cases}
\end{align}
Finally, the gradient with respect to the adjacency matrix entry $\mW_{ab}$, where we denote $\mJ^{ab}$ the matrix with $\mJ_{(a,b)}^{ab}=1$ and all other entries zero, can be written as
\begin{align}
    \frac{\partial \widetilde{\mPsi}_j}{\partial \mW_{ab}} = \sum_{t=1}^m \biggl [&(\mF_{j, t} - \mF_{j+1,t}) \times \nonumber \\ &\sum_{k=1}^t \mP^{k-1} (\mJ^{ab} \mD^{-1}) \mP^{t - k} \biggr ].
\end{align}
Gradients of filters $\widetilde{\mPhi}_J$ and $\widetilde{\mPsi}_0$, which are simple modifications of the partial derivatives of $\widetilde{\mPsi}_j$ and derivations of these gradients can be found in Sec.~\ref{sec:supp:grad} of the appendix.

\section{Empirical Results}\label{sect_results}
We investigate the LEGS module on whole graph classification and graph regression tasks that arise in a variety of contexts, with emphasis on the more complex biochemical datasets. Unlike other types of data, biochemical graphs do not exhibit the small-world structure of social graphs and may have large diameters relative to their size. Further, the connectivity patterns of biomolecules are very irregular due to 3D folding and long-range connections, and thus ordinary local node aggregation methods may miss such connectivity differences.

\begin{table}[ht]
    \caption{Dataset statistics, diameter, nodes, edges, clustering coefficient (CC) averaged over all graphs. Split into bio-chemical and social network types.}\label{tab:dataset_stats}
\centering
\adjustbox{width=\linewidth}{
    \begin{tabular}{lrrrrrr}
\toprule
{} &  \# Graphs &  \# Classes &  Diameter &   Nodes &    Edges &  CC \\
\midrule
DD               &      1178 &          2 &     19.81 &  284.32 &   715.66 &          0.48 \\
ENZYMES          &       600 &          6 &     10.92 &   32.63 &    62.14 &          0.45 \\
MUTAG            &       188 &          2 &      8.22 &   17.93 &    19.79 &          0.00 \\
NCI1             &      4110 &          2 &     13.33 &   29.87 &    32.30 &          0.00 \\
NCI109           &      4127 &          2 &     13.14 &   29.68 &    32.13 &          0.00 \\
PROTEINS         &      1113 &          2 &     11.62 &   39.06 &    72.82 &          0.51 \\
PTC              &       344 &          2 &      7.52 &   14.29 &    14.69 &          0.01 \\
\midrule
COLLAB           &      5000 &          3 &      1.86 &   74.49 &  2457.22 &          0.89 \\
IMDB-B     &      1000 &          2 &      1.86 &   19.77 &    96.53 &          0.95 \\
IMDB-M       &      1500 &          3 &      1.47 &   13.00 &    65.94 &          0.97 \\
REDDIT-B    &      2000 &          2 &      8.59 &  429.63 &   497.75 &          0.05 \\
REDDIT-12K &     11929 &         11 &      9.53 &  391.41 &   456.89 &          0.03 \\
REDDIT-5K  &      4999 &          5 &     10.57 &  508.52 &   594.87 &          0.03 \\
\bottomrule
\end{tabular}}
\end{table}

\subsection{Whole Graph Classification}

We perform whole graph classification by using eccentricity (max distance of a node to other nodes) and clustering coefficient (percentage of links between the neighbors of the node compared to a clique) as node features as are used in \citet{gao2019geometric}. We compare against graph convolutional networks (GCN)~\citep{kipf_semi-supervised_2016}, GraphSAGE~\citep{hamilton_inductive_2017}, graph attention network (GAT)~\citep{velickovic_graph_2018}, graph isomorphism network (GIN)~\citep{xu_how_2019}, Snowball network~\citep{luan_break_2019}, and fixed geometric scattering with a support vector machine classifier (GS-SVM) as in \citet{gao2019geometric}, and a baseline which is a 2-layer neural network on the features averaged across nodes (disregarding graph structure). 

These comparisons are meant to inform when including learnable graph scattering features are helpful in extracting whole graph features. Specifically, we are interested in the types of graph datasets where existing graph neural network performance can be improved upon with scattering features. We evaluate these methods across 7 biochemical datasets and 6 social network datasets where the goal is to classify between two or more classes of compounds with hundreds to thousands of graphs and tens to hundreds of nodes (See Table~\ref{tab:dataset_stats}). For more specific information on individual datasets see Appendix~\ref{sup:dataset}. We use 10-fold cross validation on all models which is elaborated on in Section \ref{sup:training} of the supplement.

\begin{table*}[t]
\caption{Mean $\pm$ std.\ over 10 test sets on biochemical (top) and social network (bottom) datasets.}
 \label{tab:full_table}
\centering
\scalebox{0.8}{
        \begin{tabular}{llllllllll}
\toprule
{} &                   LEGS-RBF &                   LEGS-FCN &         LEGS-FIXED &                        GCN &                  GraphSAGE &               GAT &                        GIN &                     GS-SVM &                   Baseline \\
\midrule
DD               &           72.58 $\pm$ 3.35 &           72.07 $\pm$ 2.37 &   69.09 $\pm$ 4.82 &           67.82 $\pm$ 3.81 &           66.37 $\pm$ 4.45 &  68.50 $\pm$ 3.62 &           42.37 $\pm$ 4.32 &           72.66 $\pm$ 4.94 &  \textbf{75.98 $\pm$ 2.81} \\
ENZYMES          &           36.33 $\pm$ 4.50 &  \textbf{38.50 $\pm$ 8.18} &   32.33 $\pm$ 5.04 &           31.33 $\pm$ 6.89 &           15.83 $\pm$ 9.10 &  25.83 $\pm$ 4.73 &           36.83 $\pm$ 4.81 &           27.33 $\pm$ 5.10 &           20.50 $\pm$ 5.99 \\
MUTAG            &           33.51 $\pm$ 4.34 &           82.98 $\pm$ 9.85 &  81.84 $\pm$ 11.24 &           79.30 $\pm$ 9.66 &          81.43 $\pm$ 11.64 &  79.85 $\pm$ 9.44 &           83.57 $\pm$ 9.68 &  \textbf{85.09 $\pm$ 7.44} &           79.80 $\pm$ 9.92 \\
NCI1             &  \textbf{74.26 $\pm$ 1.53} &           70.83 $\pm$ 2.65 &   71.24 $\pm$ 1.63 &           60.80 $\pm$ 4.26 &           57.54 $\pm$ 3.33 &  62.19 $\pm$ 2.18 &           66.67 $\pm$ 2.90 &           69.68 $\pm$ 2.38 &           56.69 $\pm$ 3.07 \\
NCI109           &  \textbf{72.47 $\pm$ 2.11} &           70.17 $\pm$ 1.46 &   69.25 $\pm$ 1.75 &           61.30 $\pm$ 2.99 &           55.15 $\pm$ 2.58 &  61.28 $\pm$ 2.24 &           65.23 $\pm$ 1.82 &           68.55 $\pm$ 2.06 &           57.38 $\pm$ 2.20 \\
PROTEINS         &           70.89 $\pm$ 3.91 &           71.06 $\pm$ 3.17 &   67.30 $\pm$ 2.94 &           74.03 $\pm$ 3.20 &           71.87 $\pm$ 3.50 &  73.22 $\pm$ 3.55 &  \textbf{75.02 $\pm$ 4.55} &           70.98 $\pm$ 2.67 &           73.22 $\pm$ 3.76 \\
PTC              &  \textbf{57.26 $\pm$ 5.54} &           56.92 $\pm$ 9.36 &   54.31 $\pm$ 6.92 &          56.34 $\pm$ 10.29 &           55.22 $\pm$ 9.13 &  55.50 $\pm$ 6.90 &           55.82 $\pm$ 8.07 &           56.96 $\pm$ 7.09 &           56.71 $\pm$ 5.54 \\
\midrule
COLLAB           &           75.78 $\pm$ 1.95 &           75.40 $\pm$ 1.80 &   72.94 $\pm$ 1.70 &           73.80 $\pm$ 1.73 &  \textbf{76.12 $\pm$ 1.58} &  72.88 $\pm$ 2.06 &           62.98 $\pm$ 3.92 &           74.54 $\pm$ 2.32 &           64.76 $\pm$ 2.63 \\
IMDB-BINARY      &           64.90 $\pm$ 3.48 &           64.50 $\pm$ 3.50 &   64.30 $\pm$ 3.68 &           47.40 $\pm$ 6.24 &           46.40 $\pm$ 4.03 &  45.50 $\pm$ 3.14 &           64.20 $\pm$ 5.77 &  \textbf{66.70 $\pm$ 3.53} &           47.20 $\pm$ 5.67 \\
IMDB-MULTI       &           41.93 $\pm$ 3.01 &           40.13 $\pm$ 2.77 &   41.67 $\pm$ 3.19 &           39.33 $\pm$ 3.13 &           39.73 $\pm$ 3.45 &  39.73 $\pm$ 3.61 &           38.67 $\pm$ 3.93 &  \textbf{42.13 $\pm$ 2.53} &           39.53 $\pm$ 3.63 \\
REDDIT-BINARY    &  \textbf{86.10 $\pm$ 2.92} &           78.15 $\pm$ 5.42 &   85.00 $\pm$ 1.93 &           81.60 $\pm$ 2.32 &           73.40 $\pm$ 4.38 &  73.35 $\pm$ 2.27 &           71.40 $\pm$ 6.98 &           85.15 $\pm$ 2.78 &           69.30 $\pm$ 5.08 \\
REDDIT-MULTI-12K &           38.47 $\pm$ 1.07 &           38.46 $\pm$ 1.31 &   39.74 $\pm$ 1.31 &  \textbf{42.57 $\pm$ 0.90} &           32.17 $\pm$ 2.04 &  32.74 $\pm$ 0.75 &           24.45 $\pm$ 5.52 &           39.79 $\pm$ 1.11 &           22.07 $\pm$ 0.98 \\
REDDIT-MULTI-5K  &           47.83 $\pm$ 2.61 &           46.97 $\pm$ 3.06 &   47.17 $\pm$ 2.93 &  \textbf{52.79 $\pm$ 2.11} &           45.71 $\pm$ 2.88 &  44.03 $\pm$ 2.57 &           35.73 $\pm$ 8.35 &           48.79 $\pm$ 2.95 &           36.41 $\pm$ 1.80 \\
\bottomrule
\end{tabular}
    }
\end{table*}

\vspace{2pt}\noindent\textbf{LEGS outperforms on biochemical datasets.} Most work on graph neural networks has focused on social networks which have a well-studied structure. However, biochemical graphs that represent molecules and tend to be overall smaller and less connected than social networks (see Table~\ref{tab:dataset_stats}). In particular, we find that LEGS outperforms other methods by a significant margin on biochemical datasets with relatively small but high diameter graphs (NCI1, NCI109, ENZYMES, PTC), as shown in Table~\ref{tab:full_table}. On extremely small graphs we find that GS-SVM performs best, which is likely because other methods with more parameters can more easily overfit the data. We reason that the performance increase exhibited by LEGS module networks, and to a lesser extent GS-SVM, on these biochemical graphs is due the ability of geometric scattering to compute complex connectivity features via its multiscale diffusion wavelets. Thus, methods that rely on a scattering construction would in general perform better, with the flexibility and trainability of the LEGS module giving it an edge on most tasks.

\vspace{2pt}\noindent\textbf{LEGS performs well on social network datasets and considerably improves performance in ensemble models.} In Table~\ref{tab:full_table}, we see that on the social network datasets LEGS performs well. We note that one of the scattering style networks (either LEGS or GS-SVM) is either the best or second best on each dataset. On each of these datasets, LEGS-RBF (which uses a SVM with a radial basis function similar to GS-SVM) and GS-SVM are well within one standard deviation of each other.   %
If we also consider combining LEGS module features with GCN features the LEGS module performs the best on five out of six of the social network datasets. Across all datasets, an ensemble model considerably increases accuracy over GCN (see Table~\ref{tab:ensemble}). This underlines the capabilities of the LEGS module, not only as an isolated model, but also as a tool for powerful hybrid GNN architectures. Similar to \citet{min2020scattering}, this supports the claim that the LEGS module (due to geometric scattering) is sensitive to complementary regularity over graphs, compared to many traditional GNNs.
\begin{figure}[ht]
  \centering
  \includegraphics[width=.9\linewidth]{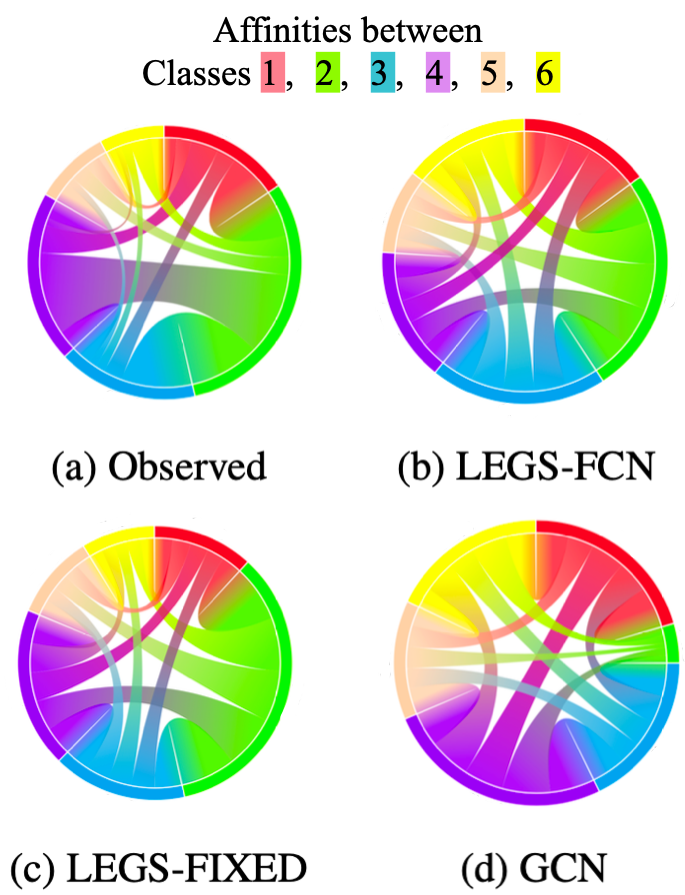}
  \label{fig:ribbon:sfig1}

\caption{Enzyme class exchange preferences empirically observed in \citet{cuesta_classification_2015}, and estimated from LEGS and GCN embeddings.}
\label{fig:ribbon}
\end{figure}

\vspace{2pt}\noindent\textbf{LEGS preserves enzyme exchange preferences while increasing performance.} One advantage of geometric scattering over other graph embedding techniques lies in the rich information present within the scattering feature space. 
This was demonstrated in \cite{gao2019geometric} where it was shown that the embeddings created through fixed geometric scattering can be used to accurately infer inter-graph relationships. Scattering features of enzyme graphs within the ENZYMES dataset~\citep{borgwardt_protein_2005} possessed sufficient global information to recreate the enzyme class exchange preferences observed empirically in \citet{cuesta_classification_2015}.
We demonstrate here that LEGS features retain similar descriptive capabilities, as shown in Figure~\ref{fig:ribbon}. Here we show chord diagrams where the chord size represents the exchange preference between enzyme classes, which is estimated as suggested in \cite{gao2019geometric}.
Our results here (and in Table \ref{tab:equant}, which provides complementary quantitative comparison) show that, with relaxations on the scattering parameters, LEGS-FCN achieves better classification accuracy than both LEGS-FIXED and GCN (see Table~\ref{tab:full_table}) while also retaining a more descriptive embedding that maintains the global structure of relations between enzyme classes.
We ran LEGS-FIXED and LEGS-FCN on the ENZYMES dataset. LEGS-FCN which allows the diffusion scales to be learned. For comparison, we also ran a standard GCN whose graph embeddings were obtained via mean pooling. To infer enzyme exchange preferences from their embeddings, we followed \citet{gao2019geometric} in defining the distance from an enzyme $e$ to the enzyme class $\text{EC}_j$ as $\text{dist}(e,\text{EC}_j) := \|v_e-\text{proj}_{C_j}(v_e)\|$, where $v_i$ is the embedding of $e$, and $C_j$ is the PCA subspace of the enzyme feature vectors within $\text{EC}_j$. The distance between the enzyme classes $\text{EC}_i$ and $\text{EC}_j$ is the average of the individual distances, $\text{mean}\{\text{dist}(e, \text{EC}_j):e\in \text{EC}_i\}$. From here, the affinity between two enzyme classes is computed as $\text{pref}(\text{EC}_i,\text{EC}_j)=w_i / \min(\frac{D_{i,i}}{D_{i,j}},\frac{D_{j,j}}{D_{j,i}})$, where $w_i$ is the percentage of enzymes in class $i$ which are closer to another class than their own, and $D_{i,j}$ is the distance between $\text{EC}_i$ and $\text{EC}_j$. 

\begin{table*}[ht]
    \centering
    \caption{Mean $\pm$ std.\ over test set selection on cross-validated LEGS-RBF Net with reduced training set size.}
\begin{tabular}{lllll}
\toprule
Train, Val, Test \% &  80\%, 10\%, 10\% &  70\%, 10\%, 20\% &  40\%, 10\%, 50\% &  20\%, 10\%, 70\% \\
\midrule
COLLAB           &  75.78 $\pm$ 1.95 &  75.00 $\pm$ 1.83 &  74.00 $\pm$ 0.51 &  72.73 $\pm$ 0.59 \\
DD               &  72.58 $\pm$ 3.35 &  70.88 $\pm$ 2.83 &  69.95 $\pm$ 1.85 &  69.43 $\pm$ 1.24 \\
ENZYMES          &  36.33 $\pm$ 4.50 &  34.17 $\pm$ 3.77 &  29.83 $\pm$ 3.54 &  23.98 $\pm$ 3.32 \\
IMDB-BINARY      &  64.90 $\pm$ 3.48 &  63.00 $\pm$ 2.03 &  63.30 $\pm$ 1.27 &  57.67 $\pm$ 6.04 \\
IMDB-MULTI       &  41.93 $\pm$ 3.01 &  40.80 $\pm$ 1.79 &  41.80 $\pm$ 1.23 &  36.83 $\pm$ 3.31 \\
MUTAG            &  33.51 $\pm$ 4.34 &  33.51 $\pm$ 1.14 &  33.52 $\pm$ 1.26 &  33.51 $\pm$ 0.77 \\
NCI1             &  74.26 $\pm$ 1.53 &  74.38 $\pm$ 1.38 &  72.07 $\pm$ 0.28 &  70.30 $\pm$ 0.72 \\
NCI109           &  72.47 $\pm$ 2.11 &  72.21 $\pm$ 0.92 &  70.44 $\pm$ 0.78 &  68.46 $\pm$ 0.96 \\
PROTIENS         &  70.89 $\pm$ 3.91 &  69.27 $\pm$ 1.95 &  69.72 $\pm$ 0.27 &  68.96 $\pm$ 1.63 \\
PTC              &  57.26 $\pm$ 5.54 &  57.83 $\pm$ 4.39 &  54.62 $\pm$ 3.21 &  55.45 $\pm$ 2.35 \\
REDDIT-BINARY    &  86.10 $\pm$ 2.92 &  86.05 $\pm$ 2.51 &  85.15 $\pm$ 1.77 &  83.71 $\pm$ 0.97 \\
REDDIT-MULTI-12K &  38.47 $\pm$ 1.07 &  38.60 $\pm$ 0.52 &  37.55 $\pm$ 0.05 &  36.65 $\pm$ 0.50 \\
REDDIT-MULTI-5K  &  47.83 $\pm$ 2.61 &  47.81 $\pm$ 1.32 &  46.73 $\pm$ 1.46 &  44.59 $\pm$ 1.02 \\
\bottomrule
\end{tabular}
    
    \label{tab:training_reduction}
\end{table*}

\vspace{2pt}\noindent\textbf{Robustness to reduced training set size.} We remark that similar to the robustness shown in~\citet{gao2019geometric} for handcrafted scattering, LEGS-based networks are able to maintain accuracy even when the training set size is shrunk to as low as 20\% of the dataset, with a median decrease of 4.7\% accuracy as when 80\% of the data is used for training, as discussed in the supplement (see Table~\ref{tab:training_reduction}).
\begin{table}[ht]
\caption{CASP GDT regression error over three seeds}
\centering
\begin{small}
\begin{tabular}{lll}
\toprule
($\mu \pm \sigma$) &                   Train MSE &                     Test MSE \\
\midrule
LEGS-FCN &  \textbf{134.34 $\pm$ 8.62} &  \textbf{144.14 $\pm$ 15.48} \\
LEGS-RBF  &           140.46 $\pm$ 9.76 &           152.59 $\pm$ 14.56 \\
LEGS-FIXED  &          136.84 $\pm$ 15.57 &            160.03 $\pm$ 1.81 \\
GCN       &          289.33 $\pm$ 15.75 &           303.52 $\pm$ 18.90 \\
GraphSAGE &          221.14 $\pm$ 42.56 &           219.44 $\pm$ 34.84 \\
GIN       &          221.14 $\pm$ 42.56 &           219.44 $\pm$ 34.84 \\
Baseline  &           393.78 $\pm$ 4.02 &           402.21 $\pm$ 21.45 \\
\bottomrule
\end{tabular}
\end{small}
\label{tab:casp}
\vspace{-5mm}
\end{table}

\begin{figure}[ht]
    \centering
    \includegraphics[width=0.8\linewidth]{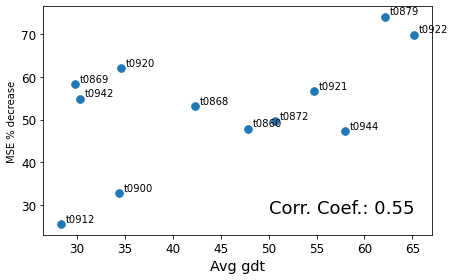}
    \vspace{-3mm}
    \caption{CASP dataset LEGS-FCN \% improvement over GCN in MSE of GDT prediction vs. Average GDT score.}
    \label{fig:casp}
\end{figure}

\begin{table}
    \caption{Mean $\pm$ std.\ over four runs of mean squared error over 19 targets for the QM9 dataset, lower is better.}
    \centering
    \begin{tabular}{ll}
\toprule
$(\mu \pm \sigma)$ & Test MSE                    \\
\midrule
LEGS-FCN    &  \textbf{0.216 $\pm$ 0.009} \\
LEGS-FIXED  &  0.228 $\pm$ 0.019 \\
GraphSAGE   &  0.524 $\pm$ 0.224 \\
GCN         &  0.417 $\pm$ 0.061 \\
GIN         &  0.247 $\pm$ 0.037 \\
Baseline    &  0.533 $\pm$ 0.041 \\
\bottomrule
\end{tabular}

    \label{tab:qm9}
    \vspace{-5mm}
\end{table}

\subsection{Graph Regression}
We next evaluate learnable scattering on two graph regression tasks, the QM9~\citep{gilmer_neural_2017} graph regression dataset, and a new task from the critical assessment of structure prediction (CASP) challenge~\citep{moult_critical_2018}. In the CASP task, the main objective is to score protein structure prediction/simulation models in terms of the discrepancy between their predicted structure and the actual structure of the protein (which is known a priori). The accuracy of such 3D structure predictions are evaluated using a variety of metrics, but we focus on the global distance test (GDT) score~\citep{modi_assessment_2016}. The GDT score measures the similarity between tertiary structures of two proteins with amino-acid correspondence. A higher score means two structures are more similar. For a set of predicted 3D structures for a protein, we would like to quantify their quality by the GDT score.

For this task we use the CASP12 dataset~\citep{moult_critical_2018} and preprocess it similarly to \citet{ingraham_generative_2019}, creating a KNN graph between proteins based on 3D coordinates of each amino acid. From this KNN graph we regress against the GDT score. We evaluate on 12 proteins from the CASP12 dataset and choose random (but consistent) splits with 80\% train, 10\% validation, and 10\% test data out of 4000 total structures. We are interested in structure similarity and use no nonstructural node features.

\vspace{2pt}\noindent\textbf{LEGS outperforms on all CASP targets.} Across all CASP targets we find that LEGS-based architectures significantly outperforms GNN and baseline models. This performance improvement is particularly stark on the easiest structures (measured by average GDT) but is consistent across all structures. In Fig.~\ref{fig:casp} we show the relationship between percent improvement of LEGS over the GCN model and the average GDT score across the target structures. We draw attention to target t0879, where LEGS shows the greatest improvement over other methods. Interestingly this target has particularly long-range dependencies~\citep{ovchinnikov_protein_2018}. Since other methods are unable to model these long-range connections, this suggests LEGS is particularly important on these more difficult to model targets.

\vspace{2pt}\noindent\textbf{LEGS outperforms on the QM9 dataset.} We evaluate the performance of LEGS-based networks on the quantum chemistry dataset QM9~\citep{gilmer_neural_2017}, which consists of 130,000 molecules with $\sim$18 nodes per molecule. We use the node features from \citet{gilmer_neural_2017}, with the addition of eccentricity and clustering coefficient features, and ignore the edge features. We whiten all targets to have zero mean and unit standard deviation. We train each network against all 19 targets as provided in the PyTorch geometric package~\cite{fey_fast_2019}, which includes the targets from \citep{gilmer_neural_2017} and \citet{wu_moleculenet_2018}.
and evaluate the mean squared error on the test set with mean and std.\ over four runs finding that LEGS improves over existing models (Table~\ref{tab:qm9}). On more difficult targets (i.e., those with large test error) LEGS-FCN is able to perform better, where on easy targets GIN is the best. Overall, scattering features offer a robust signal over many targets, and while perhaps less flexible (by construction), they achieve good average performance with significantly fewer parameters.

\begin{table}
    \centering
    \caption{Mean $\pm$ std.\ over four runs of mean squared error over 10 properties in ZINC, lower is better.}
    \scalebox{0.93}{
        
\begin{tabular}{lccc}
\toprule
$(\mu \pm \sigma)$                    & BBAB & FBAB & JBCD \\
\midrule
LEGS-FCN        & $0.591 \pm 0.026$ & \textbf{0.472 $\pm$ 0.018} & $0.603 \pm 0.022$ \\
LEGS-ATTN-FCN   & \textbf{0.551 $\pm$ 0.033} & $0.510 \pm 0.029 $ & \textbf{0.598 $\pm$ 0.072}         \\
LEGS-FIXED      & $0.548 \pm 0.017$ & $0.496 \pm 0.015$ & $0.685 \pm 0.017$ \\
MP-LEGS-FCN     & $1.033 \pm 0.081$ & $0.795 \pm 0.024$ & $0.802 \pm 0.093$ \\
GraphSAGE       & $1.004 \pm 0.037$ & $0.994 \pm 0.026$ & $0.987 \pm 0.011$ \\
GCN             & $0.841 \pm 0.025$ & $0.923 \pm 0.037$ & $0.892 \pm 0.039$ \\
GIN             & $0.670 \pm 0.018$ & $0.673 \pm 0.024$ & $0.689 \pm 0.022$ \\
Baseline        & $1.232 \pm 0.146$ & $1.399 \pm 0.163$ & $1.357 \pm 0.154$ \\
\bottomrule
\end{tabular}

    }
    \label{tab:zinc}
    \vspace{-5mm}
\end{table}
\vspace{2pt}\noindent\textbf{LEGS outperforms on the ZINC15 dataset.} We compared the performance of LEGS-based networks against various architectures using $3$ tranches of the ZINC15 \cite{irwin_zinc_2005} dataset in a multi-property prediction task (Tab.~\ref{tab:zinc}). The BBAB, FBAB and JBCD tranches contain molecules with molecular weight in the range of 200-250, 350-375 and 450-500 Daltons respectively. All networks were trained using the PyTorch geometric package~\cite{fey_fast_2019}, using a 1-hot encoding of atom type as the node signal. We predicted 10 chemical, physical and structural properties for each molecule, including measures of size, shape, lipophilicity, hydrogen bonding, and polarity. LEGS-FCN and LEGS-ATTN-FCN (with an attention layer between geometric scattering and the fully connected regression network) achieve better performance compared to non-learnable scattering (LEGS-FIXED), learnable scattering applied to hidden node representations from a 3-layer graph messaging passing network (MP-LEGS-FCN), and other graph neural networks.

\begin{table}
    \caption{Mean $\pm$ std.\ over four runs of mean squared error in binding affinity prediction of targets in BindingDB, lower is better.}
    \centering
    \begin{tabular}{lcc}
\toprule
$(\mu \pm \sigma)$                & P00918 & P14416 \\
\midrule
LEGS-FCN        & \textbf{0.0318 $\pm$ 0.0009} & $0.0441 \pm 0.0013$  \\
LEGS-ATTN-FCN   & $0.0332 \pm 0.0016$ & \textbf{0.0424 $\pm$ 0.0012} \\
LEGS-FIXED      & $0.0597 \pm 0.0017$ & $0.0514 \pm 0.0020$  \\
MP-LEGS-FCN     & $0.1991 \pm 0.0014$ & $0.1429 \pm 0.0034$  \\
GraphSAGE       & $0.1083 \pm 0.0074$ & $0.1106 \pm 0.0087$  \\
GCN             & $0.1072 \pm 0.0053$ & $0.0994 \pm 0.0039$  \\
GIN             & $0.0615 \pm 0.0022$ & $0.0583 \pm 0.0036$  \\
Baseline        & $0.1137 \pm 0.0076$ & $0.1201 \pm 0.0068$  \\
\bottomrule
\end{tabular}
    \label{tab:bindingdb}
    \vspace{-5mm}
\end{table}
\vspace{2pt}\noindent\textbf{LEGS outperforms on the BindingDB dataset.} We predicted the inhibition coefficient for ligands binding to 2 different targets in the BindingDB dataset \cite{gilson_bindingdb_2016}, namely D(2) dopamine receptor (UniProtKB ID: P14416) and Carbonic anhydrase 2 (UniProtKB ID: P00918). The molecular weight and structural diversity of these ligands was significantly higher compared to the ZINC15 tranches. Learnable scattering networks (LEGS-FCN and LEGS-ATTN-FCN) outperformed GNN and baseline models (Tab.~\ref{tab:bindingdb}). Applying attention to the scattering output did not result in a significant performance improvement over the LEGS-FCN model, perhaps due to the FCN being capable of identifying scattering features pertaining to parts of ligand responsible for binding. 

\begin{table}[ht]
    \centering
    \caption{Mean $\pm$ std.\ test set accuracy on biochemical and social network datasets.}
    \scalebox{0.85}{
        \begin{tabular}{llll}
\toprule
$(\mu \pm \sigma)$ &                GCN &                   GCN-LEGS-FIXED &                   GCN-LEGS-FCN \\
\midrule
DD               &   67.82 $\pm$ 3.81 &  \textbf{74.02 $\pm$ 2.79} &           73.34 $\pm$ 3.57 \\
ENZYMES          &   31.33 $\pm$ 6.89 &           31.83 $\pm$ 6.78 &  \textbf{35.83 $\pm$ 5.57} \\
MUTAG            &   79.30 $\pm$ 9.66 &           82.46 $\pm$ 7.88 &  \textbf{83.54 $\pm$ 9.39} \\
NCI1             &   60.80 $\pm$ 4.26 &           70.80 $\pm$ 2.27 &  \textbf{72.21 $\pm$ 2.32} \\
NCI109           &   61.30 $\pm$ 2.99 &           68.82 $\pm$ 1.80 &  \textbf{69.52 $\pm$ 1.99} \\
PROTEINS         &   74.03 $\pm$ 3.20 &           73.94 $\pm$ 3.88 &  \textbf{74.30 $\pm$ 3.41} \\
PTC              &  56.34 $\pm$ 10.29 &  \textbf{58.11 $\pm$ 6.06} &           56.64 $\pm$ 7.34 \\
COLLAB           &   73.80 $\pm$ 1.73 &  \textbf{76.60 $\pm$ 1.75} &           75.76 $\pm$ 1.83 \\
IMDB-BINARY      &   47.40 $\pm$ 6.24 &           65.10 $\pm$ 3.75 &  \textbf{65.90 $\pm$ 4.33} \\
IMDB-MULTI       &   39.33 $\pm$ 3.13 &  \textbf{39.93 $\pm$ 2.69} &           39.87 $\pm$ 2.24 \\
REDDIT-BINARY    &   81.60 $\pm$ 2.32 &           86.90 $\pm$ 1.90 &  \textbf{87.00 $\pm$ 2.36} \\
REDDIT-MULTI-12K &   42.57 $\pm$ 0.90 &           45.41 $\pm$ 1.24 &  \textbf{45.55 $\pm$ 1.00} \\
REDDIT-MULTI-5K  &   52.79 $\pm$ 2.11 &  \textbf{53.87 $\pm$ 2.75} &           53.41 $\pm$ 3.07 \\
\bottomrule
\end{tabular}
    }
    \label{tab:ensemble}
\end{table}

\vspace{2pt}\noindent\textbf{Ensembling LEGS with GCN features improves classification.}
Recent work by \citet{min2020scattering} combines the features from a fixed scattering transform with a GCN network, showing that this has empirical advantages in semi-supervised node classification, and theoretical representation advantages over a standard \cite{kipf_semi-supervised_2016} style GCN. We ensemble the learned features from a learnable scattering network (LEGS-FCN) with those of GCN and compare this to ensembling fixed scattering features with GCN as in \citet{min2020scattering}, as well as the solo features. Our setting is slightly different in that we use the GCN features from pretrained networks, only training a small 2-layer ensembling network on the combined graph level features. This network consists of a batch norm layer, a 128 width fully connected layer, a leakyReLU activation, and a final classification layer down to the number of classes. In Table~\ref{tab:ensemble} we see that combining GCN features with fixed scattering features in LEGS-FIXED or learned scattering features in LEGS-FCN always helps classification. Learnable scattering features help more than fixed scattering features overall and particularly in the biochemical domain.

\section{Conclusion}\label{sec:conclusion}

Here we introduced a flexible geometric scattering module, that serves as an alternative to standard graph neural network architectures and is capable of learning rich multi-scale features. Our learnable geometric scattering module allows a task-dependent network to choose the appropriate scales of the multiscale graph diffusion wavelets that are part of the geometric scattering transform. We show that incorporation of this module yields improved performance on graph classification and regression tasks, particularly on biochemical datasets, while keeping strong guarantees on extracted features. This also opens the possibility to provide additional flexibility to the module to enable node-specific or graph-specific tuning via attention mechanisms, which are an exciting future direction, but out of scope for the current work. %

\section{Acknowledgments}
This research was partially funded by IVADO Professor funds, CIFAR AI Chair, and NSERC Discovery grant 03267 [\emph{G.W.}]; Chan-Zuckerberg Initiative grants 182702 \& CZF2019-002440 [\emph{S.K.}]; NSF career grant 2047856 [\emph{S.K.}]; Sloan Fellowship FG-2021-15883 [\emph{S.K.}]; NIH grants R01GM135929 \& R01GM130847 [\emph{G.W., S.K.}]; and a Yale-Boehringer Ingelheim Biomedical Data Science Fellowship [\emph{D.B.}]. The content provided here is solely the responsibility of the authors and does not necessarily represent the official views of the funding agencies.

\bibliographystyle{IEEEbib}
\bibliography{main}

\begin{thebibliography}{10}

\bibitem{bronstein_geometric_2017}
M.~Bronstein, J.~Bruna, Y.~LeCun, A.~Szlam, and P.~Vandergheynst,
\newblock ``Geometric deep learning: Going beyond {{Euclidean}} data,''
\newblock {\em IEEE Signal Process. Mag.}, 2017.

\bibitem{bruna_spectral_2014}
J.~Bruna, W.~Zaremba, A.~Szlam, and Y.~LeCun,
\newblock ``Spectral networks and locally connected networks on graphs,''
\newblock in {\em Proc. of ICLR}, 2014.

\bibitem{defferrard_convolutional_2016}
M.~Defferrard, X.~Bresson, and P.~Vandergheynst,
\newblock ``Convolutional neural networks on graphs with fast localized
  spectral filtering,''
\newblock {\em Adv. in NeurIPS 29}, 2016.

\bibitem{kipf_semi-supervised_2016}
T.~Kipf and M.~Welling,
\newblock ``Semi-supervised classification with graph convolutional networks,''
\newblock {\em Proc. of ICLR}, 2016.

\bibitem{hamilton_inductive_2017}
W.~Hamilton, R.~Ying, and J.~Leskovec,
\newblock ``Inductive representation learning on large graphs,''
\newblock {\em Adv. in NeurIPS 30}, 2017.

\bibitem{xu_how_2019}
K.~Xu, W.~Hu, J.~Leskovec, and S.~Jegelka,
\newblock ``How powerful are graph neural networks?,''
\newblock {\em Proc. of ICLR}, 2019.

\bibitem{abu-el-haija_mixhop_2019-1}
Sami {Abu-El-Haija}, Bryan Perozzi, Amol Kapoor, Nazanin Alipourfard, Kristina
  Lerman, Hrayr Harutyunyan, Greg~Ver Steeg, and Aram Galstyan,
\newblock ``Mixhop: Higher-order graph convolutional architectures via
  sparsified neighborhood mixing,''
\newblock in {\em Proceedings of the 36th International Conference on Machine
  Learning}, 2019.

\bibitem{li2018deeper}
Q.~Li, Z.~Han, and X.~Wu,
\newblock ``Deeper insights into graph convolutional networks for
  semi-supervised learning,''
\newblock in {\em Proc. of AAAI}, 2018.

\bibitem{barcelo2020logical}
P.~Barcel{\'o}, E.~Kostylev, M.~Monet, J.~P{\'e}rez, J.~Reutter, and J.~Silva,
\newblock ``The logical expressiveness of graph neural networks,''
\newblock in {\em Proc. of ICLR}, 2020.

\bibitem{alon2020bottleneck}
U.~Alon and E.~Yahav,
\newblock ``On the bottleneck of graph neural networks and its practical
  implications,''
\newblock in {\em Proc. of ICLR}, 2021.

\bibitem{mallat_group_2012}
S.~Mallat,
\newblock ``Group invariant scattering,''
\newblock {\em Commun. Pure Appl. Math.}, 2012.

\bibitem{gao2019geometric}
F.~Gao, G.~Wolf, and M.~Hirn,
\newblock ``Geometric scattering for graph data analysis,''
\newblock in {\em Proc. of ICML}, 2019.

\bibitem{gama2019diffusion}
F.~Gama, A.~Ribeiro, and J.~Bruna,
\newblock ``Diffusion scattering transforms on graphs,''
\newblock in {\em Proc. of ICLR}, 2019.

\bibitem{zou_graph_2019}
D.~Zou and G.~Lerman,
\newblock ``Graph convolutional neural networks via scattering,''
\newblock {\em Appl. Comp. Harm. Anal.}, 2019.

\bibitem{perlmutter_geometric_2018}
Michael Perlmutter, Feng Gao, Guy Wolf, and Matthew Hirn,
\newblock ``Geometric wavelet scattering networks on compact riemannian
  manifolds,''
\newblock in {\em Mathematical and Scientific Machine Learning}. PMLR, 2020,
  pp. 570--604.

\bibitem{mcewen2021scattering}
Jason~D McEwen, Christopher~GR Wallis, and Augustine~N Mavor-Parker,
\newblock ``Scattering networks on the sphere for scalable and rotationally
  equivariant spherical cnns,''
\newblock {\em arXiv preprint arXiv:2102.02828}, 2021.

\bibitem{chew2022}
Joyce Chew, Holly~R. Steach, Siddharth Viswanath, Hau-Tieng Wu, Matthew Hirn,
  Deanna Needell, Smita Krishnaswamy, and Michael Perlmutter,
\newblock ``The manifold scattering transform for high-dimensional point cloud
  data,'' 2022.

\bibitem{coifman_diffusion_2006-1}
R.~Coifman and M.~Maggioni,
\newblock ``Diffusion wavelets,''
\newblock {\em Appl. Comp. Harm. Anal.}, vol. 21(1), pp. 53--94, 2006.

\bibitem{gama_stability_2019}
F.~Gama, J.~Bruna, and A.~Ribeiro,
\newblock ``Stability of graph scattering transforms,''
\newblock {\em Adv. in NeurIPS 32}, 2019.

\bibitem{perlmutter2019understanding}
M.~Perlmutter, F.~Gao, G.~Wolf, and M.~Hirn,
\newblock ``Understanding graph neural networks with asymmetric geometric
  scattering transforms,''
\newblock arXiv:1911.06253, 2019.

\bibitem{tong_data-driven_2021}
Alexander Tong, Frederik Wenkel, Kincaid MacDonald, Smita Krishnaswamy, and Guy
  Wolf,
\newblock ``Data-{Driven} {Learning} of {Geometric} {Scattering} {Networks},''
\newblock in {\em {IEEE} {MLSP}}, 2021.

\bibitem{balcilar2020analyzing}
Muhammet Balcilar, Guillaume Renton, Pierre H{\'e}roux, Benoit Ga{\"u}z{\`e}re,
  S{\'e}bastien Adam, and Paul Honeine,
\newblock ``Analyzing the expressive power of graph neural networks in a
  spectral perspective,''
\newblock in {\em International Conference on Learning Representations}, 2020.

\bibitem{nt2019revisiting}
Hoang Nt and Takanori Maehara,
\newblock ``Revisiting graph neural networks: All we have is low-pass
  filters,''
\newblock {\em arXiv preprint arXiv:1905.09550}, 2019.

\bibitem{liao_lanczosnet_2019}
Renjie Liao, Zhizhen Zhao, Raquel Urtasun, and Richard~S Zemel,
\newblock ``Lanczosnet: Multi-scale deep graph convolutional networks,''
\newblock in {\em ICLR}, 2019.

\bibitem{luan_break_2019}
S.~Luan, M.~Zhao, X.~Chang, and D.~Precup,
\newblock ``Break the ceiling: Stronger multi-scale deep graph convolutional
  networks,''
\newblock in {\em Adv. in NeurIPS 32}, 2019.

\bibitem{min2020scattering}
Frederik Wenkel, Yimeng Min, Matthew Hirn, Michael Perlmutter, and Guy Wolf,
\newblock ``Overcoming oversmoothness in graph convolutional networks via
  hybrid scattering networks,'' 2022.

\bibitem{gao_graph_2019-1}
H.~Gao and S.~Ji,
\newblock ``Graph {U}-{Nets},''
\newblock in {\em Proc. of ICML}, 2019, vol.~97 of {\em PMLR}, pp. 2083--2092.

\bibitem{velickovic_graph_2018}
P.~Veli{\v c}kovi{\'c}, G.~Cucurull, A.~Casanova, A.~Romero, P.~Li{\`o}, and
  Y.~Bengio,
\newblock ``Graph attention networks,''
\newblock {\em Proc. of ICLR}, 2018.

\bibitem{broomhead_radial_1988}
D.~Broomhead and D.~Lowe,
\newblock ``Multivariable functional interpolation and adaptive networks,''
\newblock {\em Complex Systems}, vol. 2, pp. 321--355, 1988.

\bibitem{cuesta_classification_2015}
Sergio~Martinez Cuesta, Syed~Asad Rahman, Nicholas Furnham, and Janet~M.
  Thornton,
\newblock ``The {{Classification}} and {{Evolution}} of {{Enzyme Function}},''
\newblock {\em Biophysical Journal}, vol. 109, no. 6, pp. 1082--1086, 2015.

\bibitem{borgwardt_protein_2005}
K.~M. Borgwardt, C.~S. Ong, S.~Schonauer, S.~V.~N. Vishwanathan, A.~J. Smola,
  and H.-P. Kriegel,
\newblock ``Protein function prediction via graph kernels,''
\newblock {\em Bioinformatics}, vol. 21, pp. i47--i56, 2005.

\bibitem{gilmer_neural_2017}
J.~Gilmer, S.~Schoenholz, P.~Riley, O.~Vinyals, and G.~Dahl,
\newblock ``Neural message passing for quantum chemistry,''
\newblock in {\em Proc. of ICML}, 2017, vol.~70 of {\em PMLR}, pp. 1263--1272.

\bibitem{moult_critical_2018}
J.~Moult, K.~Fidelis, A.~Kryshtafovych, T.~Schwede, and A.~Tramontano,
\newblock ``Critical assessment of methods of protein structure prediction
  ({{CASP}})\textemdash{{Round XII}},''
\newblock {\em Proteins Struct. Funct. Bioinforma.}, 2018.

\bibitem{modi_assessment_2016}
V.~Modi, Q.~Xu, S.~Adhikari, and R.~Dunbrack,
\newblock ``Assessment of {{Template}}-{{Based Modeling}} of {{Protein
  Structure}} in {{CASP11}},''
\newblock {\em Proteins}, 2016.

\bibitem{ingraham_generative_2019}
J.~Ingraham, V.~Garg, R.~Barzilay, and T.~Jaakkola,
\newblock ``Generative models for {{Graph}}-{{Based Protein Design}},''
\newblock in {\em Adv. in NeurIPS 32}, 2019.

\bibitem{ovchinnikov_protein_2018}
S.~Ovchinnikov, H.~Park, D.~Kim, F.~DiMaio, and D.~Baker,
\newblock ``Protein structure prediction using {{Rosetta}} in {{CASP12}},''
\newblock {\em Proteins Struct. Funct. Bioinforma.}, 2018.

\bibitem{fey_fast_2019}
Matthias Fey and Jan~E. Lenssen,
\newblock ``Fast graph representation learning with {PyTorch Geometric},''
\newblock in {\em ICLR Workshop on Representation Learning on Graphs and
  Manifolds}, 2019.

\bibitem{wu_moleculenet_2018}
Z.~Wu, B.~Ramsundar, E.~Feinberg, J.~Gomes, C.~Geniesse, A.~Pappu, K.~Leswing,
  and V.~Pande,
\newblock ``{{MoleculeNet}}: A benchmark for molecular machine learning,''
\newblock {\em Chem. Sci.}, 2018.

\bibitem{irwin_zinc_2005}
John~J. Irwin and Brian~K. Shoichet,
\newblock ``{ZINC} – {A} {Free} {Database} of {Commercially} {Available}
  {Compounds} for {Virtual} {Screening},''
\newblock {\em Journal of chemical information and modeling}, vol. 45, no. 1,
  pp. 177--182, 2005.

\bibitem{gilson_bindingdb_2016}
Michael~K. Gilson, Tiqing Liu, Michael Baitaluk, George Nicola, Linda Hwang,
  and Jenny Chong,
\newblock ``{BindingDB} in 2015: {A} public database for medicinal chemistry,
  computational chemistry and systems pharmacology,''
\newblock {\em Nucleic Acids Research}, vol. 44, no. D1, pp. D1045--1053, Jan.
  2016.

\bibitem{dobson_distinguishing_2003}
Paul~D. Dobson and Andrew~J. Doig,
\newblock ``Distinguishing {Enzyme} {Structures} from {Non}-enzymes {Without}
  {Alignments},''
\newblock {\em Journal of Molecular Biology}, vol. 330, no. 4, pp. 771--783,
  2003.

\bibitem{wale_comparison_2008}
Nikil Wale, Ian~A Watson, and George Karypis,
\newblock ``Comparison of {Descriptor} {Spaces} for {Chemical} {Compound}
  {Retrieval} and {Classiﬁcation},''
\newblock {\em Knowledge and Information Systems}, 2008.

\bibitem{toivonen_statistical_2003}
H.~Toivonen, A.~Srinivasan, R.~D. King, S.~Kramer, and C.~Helma,
\newblock ``Statistical evaluation of the {Predictive} {Toxicology} {Challenge}
  2000-2001,''
\newblock {\em Bioinformatics}, vol. 19, no. 10, pp. 1183--1193, 2003.

\bibitem{yanardag_deep_2015}
Pinar Yanardag and S.V.N. Vishwanathan,
\newblock ``Deep {Graph} {Kernels},''
\newblock in {\em Proceedings of the 21th {ACM} {SIGKDD} {International}
  {Conference} on {Knowledge} {Discovery} and {Data} {Mining} - {KDD} '15},
  Sydney, NSW, Australia, 2015, pp. 1365--1374.

\bibitem{kingma_adam:_2015}
Diederik~P. Kingma and Jimmy Ba,
\newblock ``Adam: {A} {Method} for {Stochastic} {Optimization},''
\newblock in {\em 3rd {International} {Conference} on {Learning}
  {Representations}}, 2015.

\bibitem{paszke_pytorch_2019}
Adam Paszke, Sam Gross, Francisco Massa, Adam Lerer, James Bradbury, Gregory
  Chanan, Trevor Killeen, Zeming Lin, Natalia Gimelshein, Luca Antiga, Alban
  Desmaison, Andreas Kopf, Edward Yang, Zachary DeVito, Martin Raison, Alykhan
  Tejani, Sasank Chilamkurthy, Benoit Steiner, Lu~Fang, Junjie Bai, and Soumith
  Chintala,
\newblock ``{PyTorch}: {An} {Imperative} {Style}, {High}-{Performance} {Deep}
  {Learning} {Library},''
\newblock in {\em Advances in {Neural} {Information} {Processing} {Systems}
  33}, 2019, pp. 8026--8037.

\end{thebibliography}

\clearpage
\section{Biography Section}

\begin{IEEEbiography}[{\includegraphics[trim=2.9in 1.2in 0.9in 0.1in,width=1in,clip,keepaspectratio]{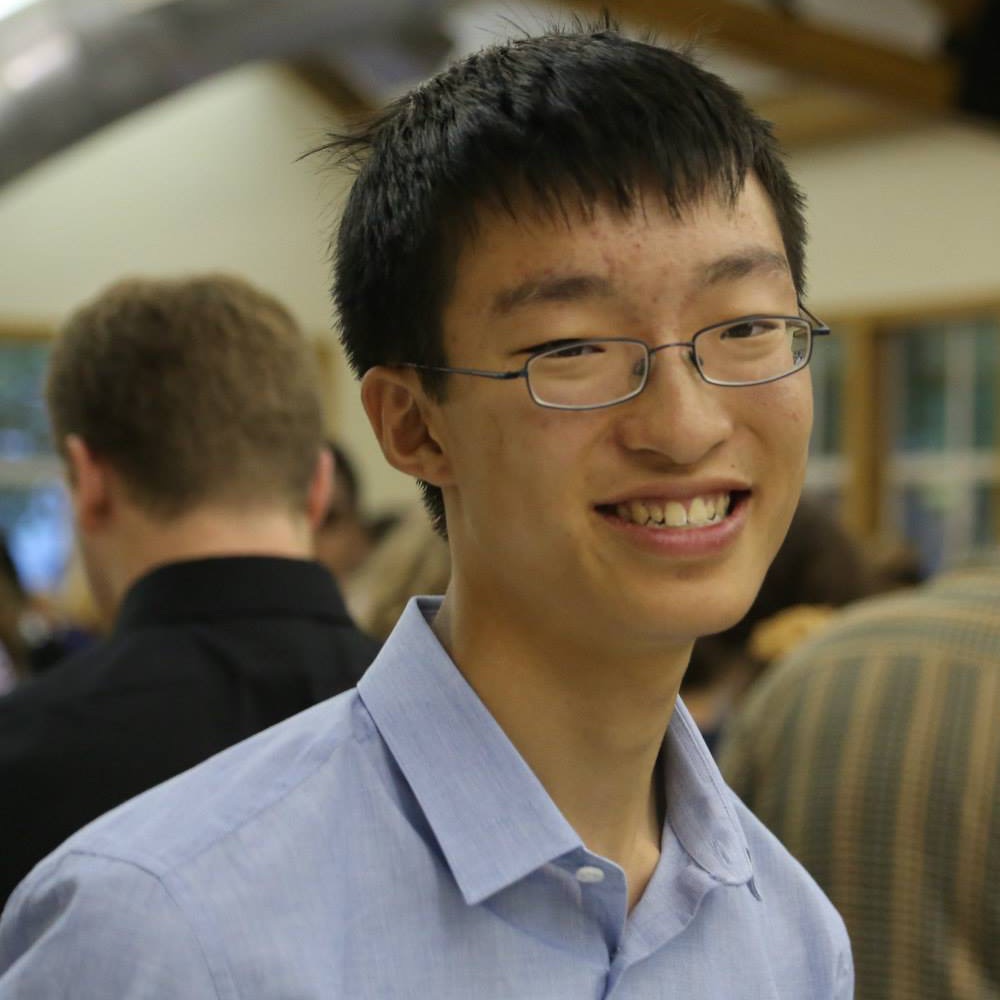}}]{Alexander Tong} is a postdoctoral fellow in the Department of Computer Science and Operations Research (DIRO) at the Université de Montréal and is also affiliated with Mila (the Quebec AI institute). He holds an M.Phil.\ and Ph.D.\ in computer science from Yale University. His research interests are in machine learning and algorithms. His research focuses on optimal transport and neural network methods for geometric single-cell data using a combination of graph and deep learning methods.
\end{IEEEbiography}

\begin{IEEEbiography}[{\includegraphics[trim=2.9in 1.2in 0.9in 0.1in,width=1in,clip,keepaspectratio]{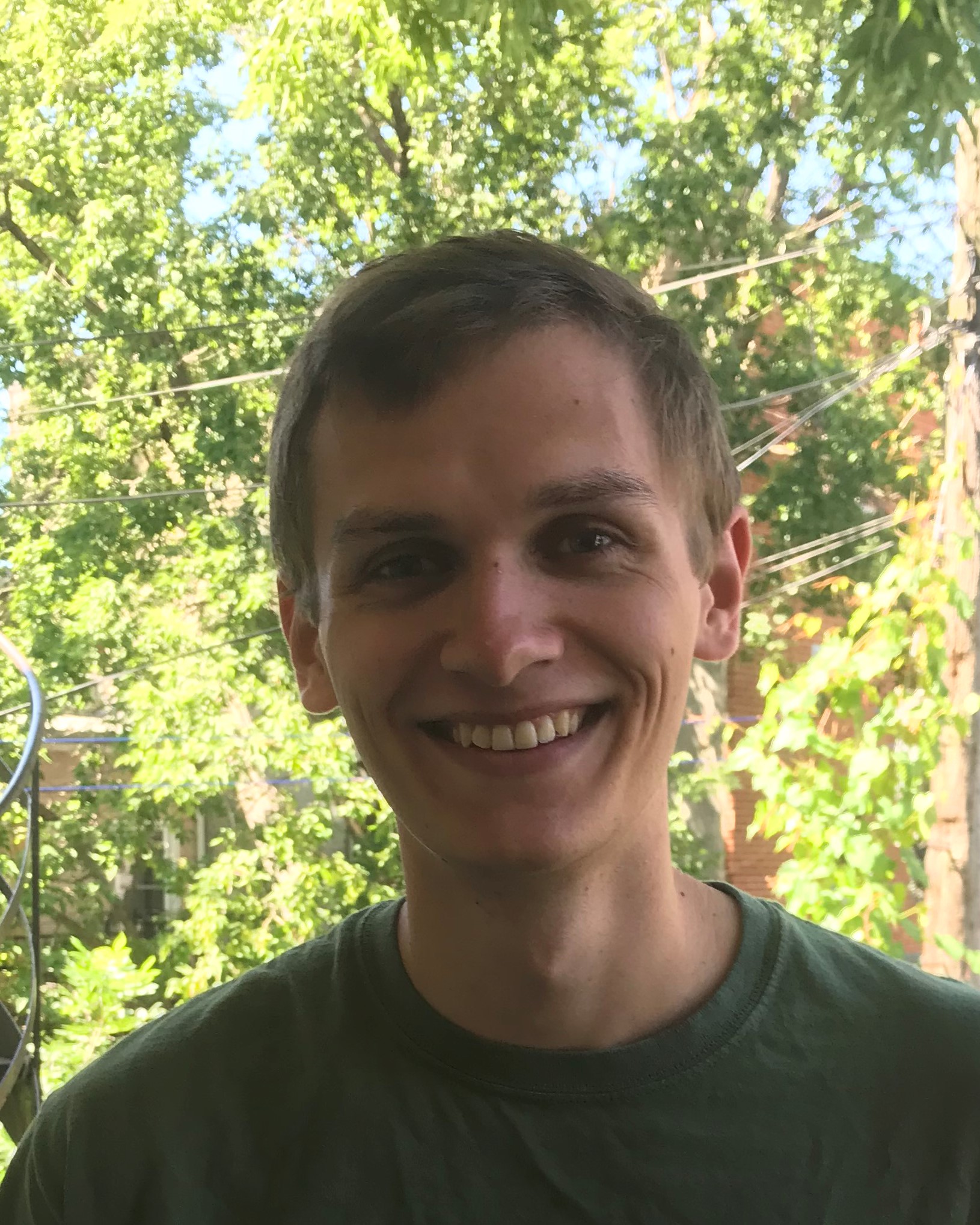}}]{Frederik Wenkel} received the B.Sc.\ degree in Mathematics and the M.Sc.\ degree in Mathematics at Technical University of Munich, in 2019. He is currently a Ph.D.\ candidate in Applied Mathematics at Universit\'{e} de Montr\'{e}al (UdeM) and Mila (the Quebec AI institute), working on geometric deep learning. In particular, he is interested in graph neural networks and their applications in domains such as social networks, bio-chemistry and finance.
\end{IEEEbiography}

\begin{IEEEbiography}[{\includegraphics[width=1in,clip,keepaspectratio]{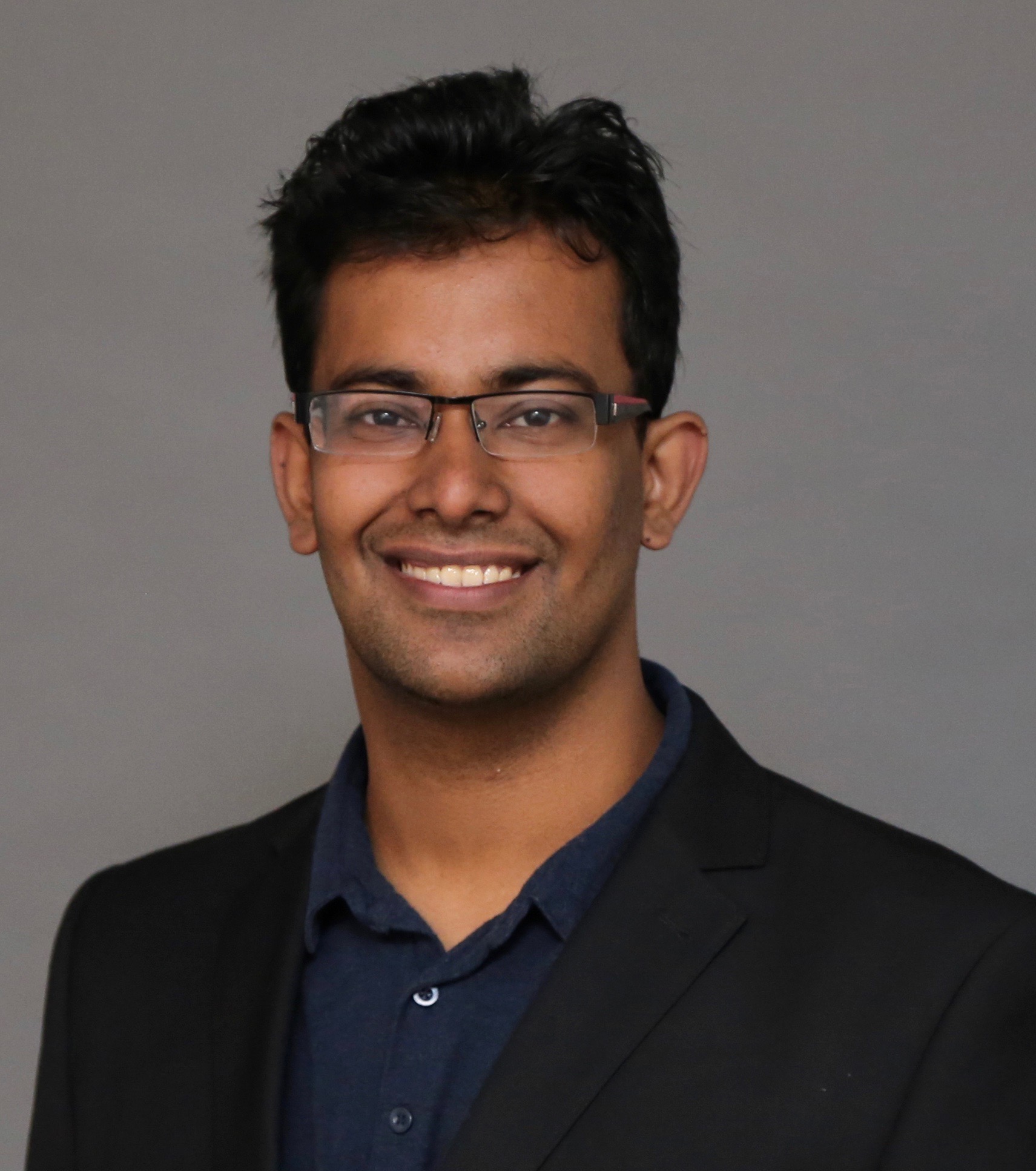}}]{Dhananjay Bhaskar} is a Postdoctoral Research Associate in the Department of Genetics at Yale University and a Yale-Boehringer Ingelheim Biomedical Data Science Fellow. He earned his Ph.D. in Biomedical Engineering from Brown University in 2021. His research combines mathematical modeling, topological data analysis and machine learning for the analysis of multimodal and multiscale data in a variety of biological systems.
\end{IEEEbiography}

\begin{IEEEbiography}[{\includegraphics[width=1in,clip,keepaspectratio]{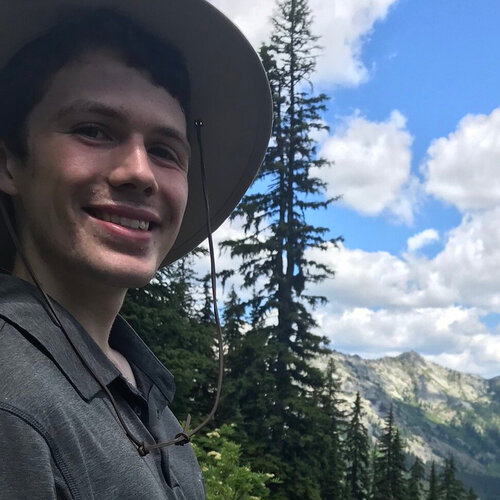}}]{Kincaid MacDonald}
is an undergraduate at Yale studying mathematics, computer science, philosophy, and exploring their practical intersection in artificial intelligence. He is particularly interested in developing mathematical tools to shine light into the ``black box'' of modern deep neural networks.
\end{IEEEbiography}

\begin{IEEEbiography}[{\includegraphics[width=1in,clip,keepaspectratio]{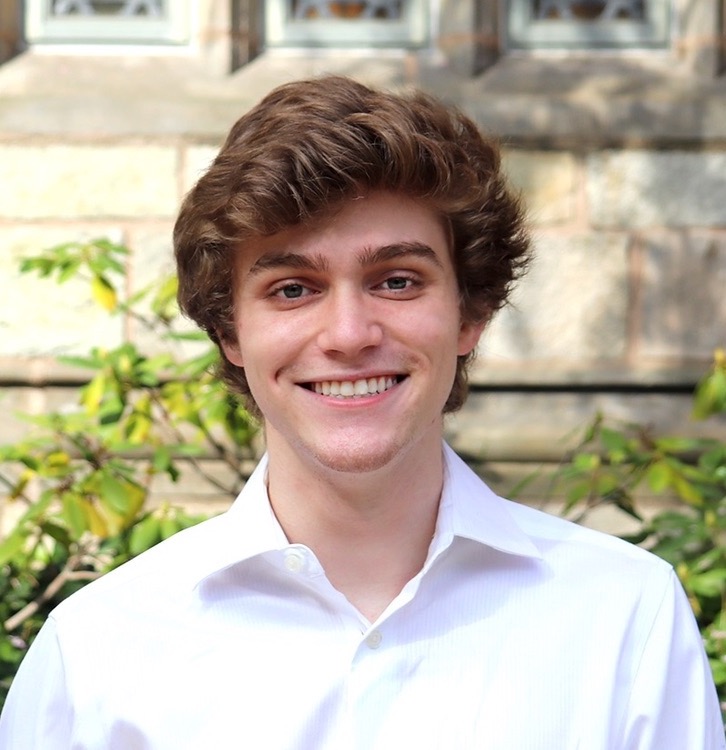}}]{Jackson Grady} is an undergraduate at Yale College currently pursuing a B.S. in Computer Science. He joined the Krishnaswamy Lab in Summer 2021 to explore his research interests in machine learning. His research focuses on informative representations of graph-structured data and its applications to tackling problems such as drug discovery and analysis of protein folding.
\end{IEEEbiography}

\begin{IEEEbiography}[{\includegraphics[width=1in,clip,keepaspectratio]{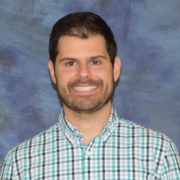}}]{Michael Perlmutter} is a Hedrick Assistant Adjunct Professor in the Department of Mathematics at the University of California Los Angeles. Previously, he has held postdoctoral positions in the Department of Computational Mathematics, Science and Engineering at Michigan State University and in the Department of Statistics and Operations Research at the University of North Carolina at Chapel Hill. He earned his Ph.D. in Mathematics from Purdue University in 2016. His research uses the methods of applied probability and computational harmonic analysis to develop an analyze new methods for data sets with geometric structure.
\end{IEEEbiography}

\begin{IEEEbiography}[{\includegraphics[trim=0.8in 0in 0.8in 0in,width=1in,clip,keepaspectratio]{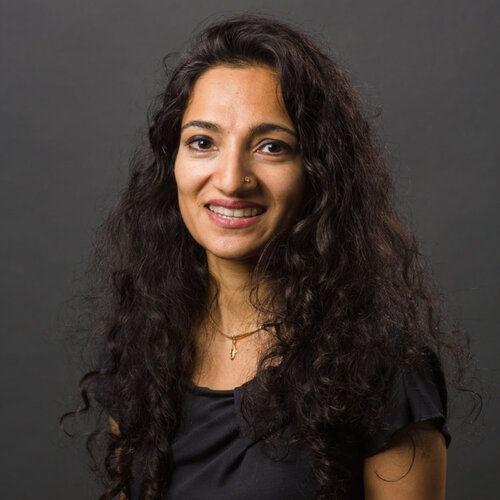}}]{Smita Krishnaswamy} is an Associate Professor in the Department of Genetics at the Yale School of Medicine and in the Department of Computer Science in the Yale School of Applied Science and Engineering, and a core member of the Program in Applied Mathematics. She is also affiliated with the Yale Center for Biomedical Data Science, Yale Cancer Center, and Program in Interdisciplinary Neuroscience. Her research focuses on developing unsupervised machine learning methods (especially graph signal processing and deep-learning) to denoise, impute, visualize and extract structure, patterns and relationships from big, high throughput, high dimensional biomedical data. Her methods have been applied variety of datasets from many systems including embryoid body differentiation, zebrafish development, the epithelial-to-mesenchymal transition in breast cancer, lung cancer immunotherapy, infectious disease data, gut microbiome data and patient data.
She was trained as a computer scientist with a Ph.D.\ from the University of Michigan’s EECS department where her research focused on algorithms for automated synthesis and probabilistic verification of nanoscale logic circuits. Following her time in Michigan, she spent 2 years at IBM’s TJ Watson Research  Center as a researcher in the systems division where she worked on automated bug finding and error correction in logic.
\end{IEEEbiography}

\begin{IEEEbiography}[{\includegraphics[trim=0in 1.5in 0in 0.5in,width=1in,clip,keepaspectratio]{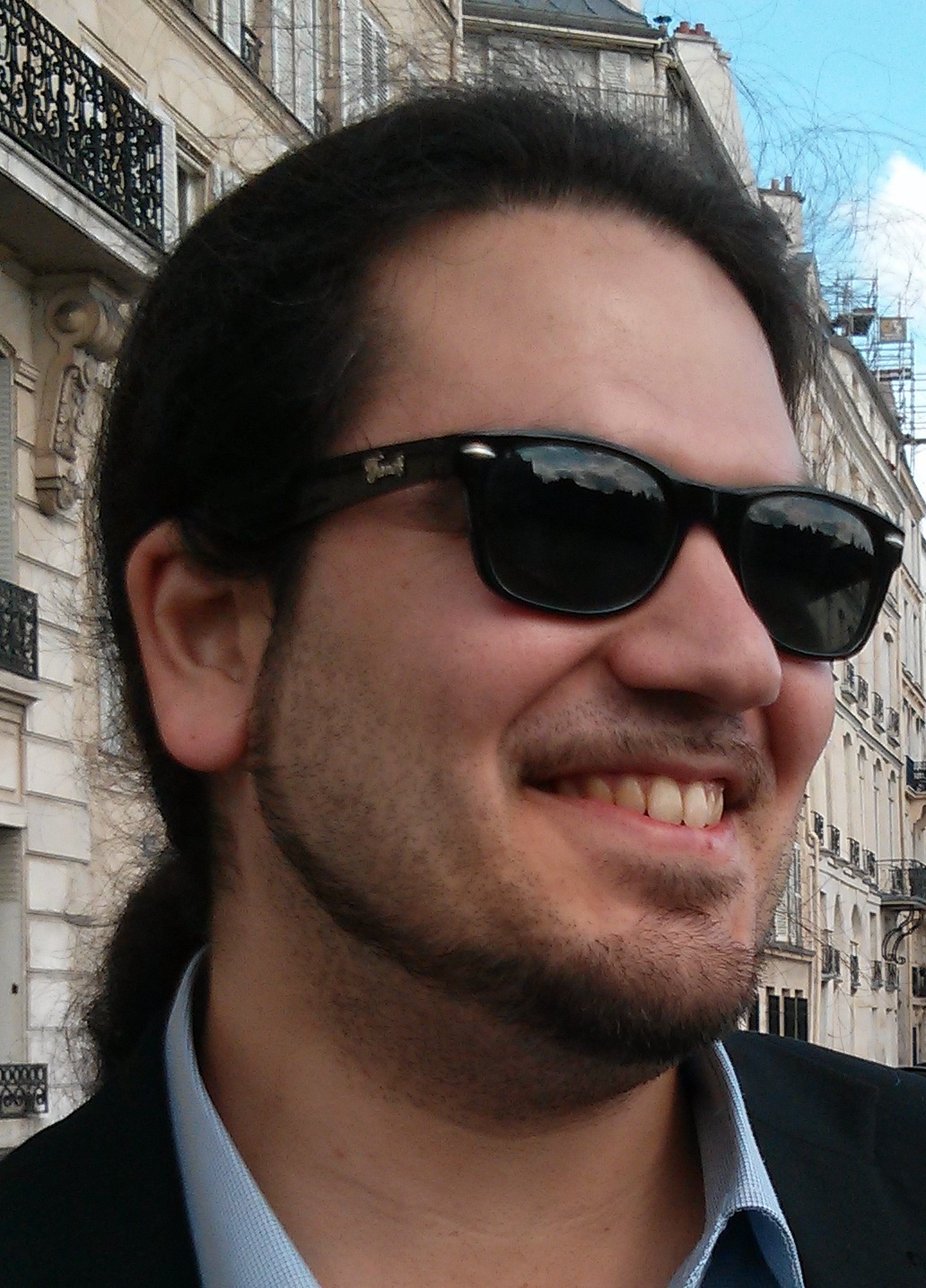}}]{Guy Wolf}
is an associate professor in the Department of Mathematics and Statistics (DMS) at the Universit\'{e} de Montr\'{e}al (UdeM), a core academic member of Mila (the Quebec AI institute), and holds a Canada CIFAR AI Chair. He is also affiliated with the CRM center of mathematical sciences and the IVADO institute of data valorization. He holds an M.Sc.\ and a Ph.D.\ in computer science from Tel Aviv University. Prior to joining UdeM in 2018, he was a postdoctoral researcher (2013-2015) in the Department of Computer Science at \'{E}cole Normale Sup\'{e}rieure in Paris (France), and a Gibbs Assistant Professor (2015-2018) in the Applied Mathematics Program at Yale University. His research focuses on manifold learning and geometric deep learning for exploratory data analysis, including methods for dimensionality reduction, visualization, denoising, data augmentation, and coarse graining. Further, he is particularly interested in biomedical data exploration applications of such methods, e.g., in single cell genomics/proteomics and neuroscience.
\end{IEEEbiography}

\section{The Proof of Theorem \ref{thm: random scales}}\label{sec: auxilliary proofs}

\begin{proof}
Since each of the rows of $\mathbf{F}$ sums to one, we may define $\tau_j$, $j=0,\ldots,J$ to be an independent random variables with probability distribution given by $\mathbf{F}_{j,\cdot}$. 
Then, by definition
\begin{align*}
    \mathbf{\widetilde{\Psi}}_j \mathbf{x}= \sum_{t=1}^m \mathbf{F}_{(j,t)} \mathbf{P}^t_\alpha \mathbf{x} - \mathbf{F}_{(j+1,t)} \mathbf{P}^t_\alpha \mathbf{x} = \mathbb{E} \left[\mathbf{P}^{\tau_j}_\alpha \mathbf{x} - \mathbf{P}^{\tau_{j+1}}_\alpha \mathbf{x}\right]
\end{align*} for  for all $1\leq j\leq J-1$. Similarly, we have 
\begin{equation*}
    \mathbf{\widetilde{\Psi}}_0 \mathbf{x} = \mathbb{E}\left[ \mathbf{x} - \mathbf{P}^{\tau_{1}}_\alpha \mathbf{x}\right]\quad\text{and}\quad     \mathbf{\widetilde{\Phi}}_J \mathbf{x} = \mathbb{E} \mathbf{P}^{\tau_{J}}_\alpha \mathbf{x}
\end{equation*}
Therefore, by Jensen's inequality we have 
\begin{align}
    &\sum_{j=0}^J \|\mathbf{\widetilde{\Psi}}_j{\mathbf{x}}\Vert_{\mD^{-\frac{1}{2}}}^2+\|\widetilde{\Phi}_J \mathbf{x}\Vert_{\mD^{-\frac{1}{2}}}^2
    \\=& \|  \mathbb{E}\left[\mathbf{x} - \mathbf{P}^{\tau_{1}}_\alpha \mathbf{x}\right] \Vert_{\mD^{-\frac{1}{2}}}^2\\&\qquad+ \sum_{j=1}^{J-1} \|  \mathbb{E} \left[\mathbf{P}^{\tau_j}_\alpha \mathbf{x} - \mathbf{P}^{\tau_{j+1}}_\alpha \mathbf{x}\right] \Vert_{\mD^{-\frac{1}{2}}}^2 + \|\mathbb{E}\mathbf{P}^{\tau_{J}}_\alpha \mathbf{x}\Vert_{\mD^{-\frac{1}{2}}}^2 \nonumber \\
    \leq& \mathbb{E} \biggl[\|  \mathbf{x} - \mathbf{P}^{\tau_{1}}_\alpha \mathbf{x} \Vert_{\mD^{-\frac{1}{2}}}^2+\sum_{j=0}^J \| \mathbf{P}^{\tau_j}_\alpha \mathbf{x} - \mathbf{P}^{\tau_{j+1}}_\alpha \mathbf{x} \Vert_{\mD^{-\frac{1}{2}}}^2 
    \\&\qquad+ \|\mathbf{P}^{\tau_{J}}_\alpha \mathbf{x}\Vert_{\mD^{-\frac{1}{2}}}^2 \biggr] \label{eqn: bound inside E}.
\end{align}
By  our assumptions on the support of the  rows of $\mathbf{F}$, we have $\tau_j<\tau_{j+1}$ for all $j$ . Therefore, the conditions of Theorem \ref{thm:frame} are satisfied with probability one and so we have 
\begin{align*}
 &\|  \mathbf{x} - \mathbf{P}^{\tau_{1}}_\alpha \mathbf{x} \Vert_{\mD^{-\frac{1}{2}}}^2+\sum_{j=0}^J \| \mathbf{P}^{\tau_j}_\alpha \mathbf{x} - \mathbf{P}^{\tau_{j+1}}_\alpha \mathbf{x} \Vert_{\mD^{-\frac{1}{2}}}^2 +\|\mathbf{P}^{\tau_{J}}_\alpha \mathbf{x}\Vert_{\mD^{-\frac{1}{2}}}^2 
 \\&\qquad\leq \|\mathbf{x}\Vert_{\mD^{-\frac{1}{2}}}^2
\end{align*}
with probability one. Combining this with completes the proof.
\end{proof}

\section{Gradients of the LEGS module}\label{sec:supp:grad}
Here we analyze the gradients of the LEGS module with respect to its outputs. As depicted in Fig.~\ref{fig:arch}, the LEGS module has 3 inputs, $W$, $\Theta$, and $\alpha$, and $J$ permutation equivariant output matrices $\widetilde{W}_\mF = \{\widetilde\mPsi_{j}, \widetilde\mPhi_{J}\}_{j=0}^{J-1}$. Here we compute partial derivatives of $\widetilde{\mPsi}_j$ for $1 \le j \le J - 1$. The other related gradients of $\widetilde\mPsi_0$ and $\widetilde\mPhi_J$ are easily deducible from these. 
The following partial derivatives and an application of the train rule yield the gradients presented in the main paper:
\begin{align}
    \frac{\partial \widetilde{\mPsi}_j}{\partial \mF_{k,t}} &= 
    \begin{cases}
        \mP^t & \text { if } k = j, \\
        -\mP^t & \text{ if } k = j + 1, \\
        \boldsymbol{0}_{n\times n} & \text{ else,}
    \end{cases}\\
    \frac{\partial \widetilde{\mPsi}_j}{\partial \mP^t} &= (\mF_{j, t} - \mF_{j+1, t}), \\
    \frac{\partial \mF_{k,t}}{\mTheta_{k, t}} &= 
    \begin{cases}
        \sigma(\vtheta_j) (1 - \sigma(\vtheta_j)) & \text { if } k = j, \\
        -\sigma(\vtheta_j) \sigma(\vtheta_k) & \text{ else, }
    \end{cases}\\
    \frac{\partial \mP^t}{\mW_{ab}} &= \sum_{k=1}^t \mP^{k-1} (\mJ^{ab} \mD^{-1}) \mP^{t - k}.
\end{align}

\section{Datasets}\label{sup:dataset}
In this section we provide further information and analysis on the individual datasets which relates the composition of the dataset as shown in Table~\ref{tab:dataset_stats} to the relative performance of our models as shown in Table~\ref{tab:full_table}. Datasets used in Tables \ref{tab:zinc} and \ref{tab:bindingdb} are also described here.

\vspace{2pt}\noindent\textbf{DD}~\cite{dobson_distinguishing_2003} is a dataset extracted from the protein data bank (PDB) of 1178 high resolution proteins. The task is to distinguish between enzymes and non-enzymes. Since these are high resolution structures, these graphs are significantly larger than those found in our other biochemical datasets with a mean graph size of 284 nodes with the next largest biochemical dataset with a mean size of 39 nodes.

\vspace{2pt}\noindent\textbf{ENZYMES}~\cite{borgwardt_protein_2005} is a dataset of 600 enzymes divided into 6 balanced classes of 100 enzymes each. As we analyzed in the main text, scattering features are better able to preserve the structure between classes. LEGS-FCN slightly relaxes this structure but improves accuracy from 32 to 39\% over LEGS-FIXED.

\vspace{2pt}\noindent\textbf{NCI1, NCI109}~\cite{wale_comparison_2008} contain slight variants of 4100 chemical compounds encoded as graphs. Each compound is separated into one of two classes based on its activity against non-small cell lung cancer and ovarian cancer cell lines. Graphs in this dataset have 30 nodes with a similar number of edges. This makes for long graphs with high diameter.

\vspace{2pt}\noindent\textbf{PROTEINS}~\cite{borgwardt_protein_2005} contains 1178 protein structures with the goal of classifying enzymes vs. non enzymes. GCN outperforms all other models on this dataset, however the Baseline model, where no structure is used also performs very similarly. This suggests that the graph structure within this dataset does not add much information over the structure encoded in the eccentricity and clustering coefficient.

\vspace{2pt}\noindent\textbf{PTC}~\cite{toivonen_statistical_2003} contains 344 chemical compound graphs divided into two classes based on whether or not they cause cancer in rats. This dataset is very difficult to classify without features however LEGS-RBF and LEGS-FCN are able to capture the long range connections slightly better than other methods.

\vspace{2pt}\noindent\textbf{COLLAB}~\cite{yanardag_deep_2015} contains 5000 ego-networks of different researchers from high energy physics, condensed matter physics or astrophysics. The goal is to determine which field the research belongs to. The GraphSAGE model performs best on this dataset although the LEGS-RBF network performs nearly as well. Ego graphs have a very small average diameter. Thus, shallow networks can perform quite well on them as is the case here.

\vspace{2pt}\noindent\textbf{IMDB}~\cite{yanardag_deep_2015} contains graphs with nodes representing actresses/actors and edges between them if they are in the same move. These graphs are also ego graphs around specific actors. IMDB-BINARY classifies between action and romance genres. IMDB-MULTI classifies between 3 classes. Somewhat surprisingly GS-SVM performs the best with other LEGS networks close behind. This could be due to oversmoothing on the part of GCN and GraphSAGE when the graphs are so small.

\vspace{2pt}\noindent\textbf{REDDIT}~\cite{yanardag_deep_2015} consists of three independent datasets. In REDDIT-BINARY/MULTI-5K/MULTI-12K, each graph represents a discussion thread where nodes correspond to users and there is an edge between two nodes if one replied to the other's comment. The task is to identify which subreddit a given graph came from. On these datasets GCN outperforms other models.

\vspace{2pt}\noindent\textbf{ZINC15}~\cite{irwin_zinc_2005} is a database of small drug-like molecules for virtual screening. The data is organized into 2D tranches consisting of approximately 997 million molecules categorized by molecular weight, solubility (LogP), reactivity and availability for purchase.

\vspace{2pt}\noindent\textbf{BindingDB}~\cite{gilson_bindingdb_2016} is a publicly available database of binding affinities, focusing on interactions between small, drug-like ligands and proteins considered to be candidate drug-targets. BindingDB contains approximately 2.5 million interactions between more than 8,000 proteins and 1 million drug-like molecules.

\section{Training Details}\label{sup:training}

We train all models for a maximum of 1000 epochs with an initial learning rate of $1.0 \times 10^{-4}$ using the ADAM optimizer~\citep{kingma_adam:_2015}. We terminate training if validation loss does not improve for 100 epochs testing every 10 epochs. Our models are implemented with Pytorch~\cite{paszke_pytorch_2019} and Pytorch geometric~\cite{fey_fast_2019}. Models were run on a variety of hardware resources. For all models we use $q=4$ normalized statistical moments for the node to graph level feature extraction and $m = 16$ diffusion scales in line with choices in \cite{gao2019geometric}. Most experiments were run on a $2 \times 18$ core Intel(R) Xeon(R) CPU E5-2697 v4 @ 2.30GHz server with 512GB of RAM equipped with two Nvidia TITAN RTX gpus. 

\subsection{Cross Validation Procedure}\label{sec:sup:cv}
For all datasets we use 10-fold cross validation with 80\% training data 10\% validation data and 10\% test data for each model. We first split the data into 10 (roughly) equal partitions. For each model we take exactly one of the partitions to be the test set and one of the remaining nine to be the validation set. We then train the model on the remaining eight partitions using the cross-entropy loss on the validation for early stopping checking every ten epochs. For each test set, we use majority voting of the nine models trained with that test set. We then take the mean and standard deviation across these test set scores to average out any variability in the particular split chosen. This results in 900 models trained on every dataset. With mean and standard deviation over 10 ensembled models each with a separate test set.

\begin{table}[htb]
\begin{center}
\caption{Quantified distance between the empirically observed enzyme class exchange preferences of \cite{cuesta_classification_2015}}
\label{tab:equant}
\begin{tabular}{rrr}
\toprule
 LEGS-FIXED &   LEGS-FCN &  GCN\\
\midrule
0.132 &  0.146 &  0.155\\
\bottomrule
\end{tabular}
\end{center}
\end{table}
\begin{table*}[thb]
    \centering
    \caption{Mean $\pm$ std.\ over four runs of mean squared error over 19 targets for the QM9 dataset, lower is better.}
    \scalebox{1}{
        \begin{tabular}{lllllll}
\toprule
{} &                    LEGS-FCN &                  LEGS-FIXED &                GCN &          GraphSAGE &                         GIN &           Baseline \\
\midrule
$\mu$  &  \textbf{0.749 $\pm$ 0.025} &           0.761 $\pm$ 0.026 &  0.776 $\pm$ 0.021 &  0.876 $\pm$ 0.083 &           0.786 $\pm$ 0.032 &  0.985 $\pm$ 0.020 \\
$\alpha$  &  \textbf{0.158 $\pm$ 0.014} &           0.164 $\pm$ 0.024 &  0.448 $\pm$ 0.007 &  0.555 $\pm$ 0.295 &           0.191 $\pm$ 0.060 &  0.593 $\pm$ 0.013 \\
$\epsilon_{\textrm{HOMO}}$  &  \textbf{0.830 $\pm$ 0.016} &           0.856 $\pm$ 0.026 &  0.899 $\pm$ 0.051 &  0.961 $\pm$ 0.057 &           0.903 $\pm$ 0.033 &  0.982 $\pm$ 0.027 \\
$\epsilon_{\textrm{LUMO}}$  &           0.511 $\pm$ 0.012 &  \textbf{0.508 $\pm$ 0.005} &  0.549 $\pm$ 0.010 &  0.688 $\pm$ 0.216 &           0.555 $\pm$ 0.006 &  0.805 $\pm$ 0.025 \\
$\Delta \epsilon$  &  \textbf{0.587 $\pm$ 0.007} &  \textbf{0.587 $\pm$ 0.006} &  0.609 $\pm$ 0.009 &  0.755 $\pm$ 0.177 &           0.613 $\pm$ 0.013 &  0.792 $\pm$ 0.010 \\
$\langle R^2 \rangle$  &  \textbf{0.646 $\pm$ 0.013} &           0.674 $\pm$ 0.047 &  0.889 $\pm$ 0.014 &  0.882 $\pm$ 0.118 &           0.699 $\pm$ 0.033 &  0.833 $\pm$ 0.026 \\
$\textrm{ZPVE}$  &           0.018 $\pm$ 0.012 &           0.020 $\pm$ 0.011 &  0.099 $\pm$ 0.011 &  0.321 $\pm$ 0.454 &  \textbf{0.012 $\pm$ 0.006} &  0.468 $\pm$ 0.005 \\
$U_0$  &           0.017 $\pm$ 0.005 &           0.024 $\pm$ 0.008 &  0.368 $\pm$ 0.015 &  0.532 $\pm$ 0.405 &  \textbf{0.015 $\pm$ 0.005} &  0.379 $\pm$ 0.013 \\
$U$  &           0.017 $\pm$ 0.005 &           0.024 $\pm$ 0.008 &  0.368 $\pm$ 0.015 &  0.532 $\pm$ 0.404 &  \textbf{0.015 $\pm$ 0.005} &  0.378 $\pm$ 0.013 \\
$H$  &           0.017 $\pm$ 0.005 &           0.024 $\pm$ 0.008 &  0.368 $\pm$ 0.015 &  0.532 $\pm$ 0.404 &  \textbf{0.015 $\pm$ 0.005} &  0.378 $\pm$ 0.013 \\
$G$ &           0.017 $\pm$ 0.005 &           0.024 $\pm$ 0.008 &  0.368 $\pm$ 0.015 &  0.533 $\pm$ 0.404 &  \textbf{0.015 $\pm$ 0.005} &  0.380 $\pm$ 0.014 \\
$c_{\textrm{v}}$ &  \textbf{0.254 $\pm$ 0.013} &           0.279 $\pm$ 0.023 &  0.548 $\pm$ 0.023 &  0.617 $\pm$ 0.282 &           0.294 $\pm$ 0.003 &  0.631 $\pm$ 0.013 \\
$U_0^{\textrm{ATOM}}$ &           0.034 $\pm$ 0.014 &           0.033 $\pm$ 0.010 &  0.215 $\pm$ 0.009 &  0.356 $\pm$ 0.437 &  \textbf{0.020 $\pm$ 0.002} &  0.478 $\pm$ 0.014 \\
$U^{\textrm{ATOM}}$ &           0.033 $\pm$ 0.014 &           0.033 $\pm$ 0.010 &  0.214 $\pm$ 0.009 &  0.356 $\pm$ 0.438 &  \textbf{0.020 $\pm$ 0.002} &  0.478 $\pm$ 0.014 \\
$H^{\textrm{ATOM}}$ &           0.033 $\pm$ 0.014 &           0.033 $\pm$ 0.010 &  0.213 $\pm$ 0.009 &  0.355 $\pm$ 0.438 &  \textbf{0.020 $\pm$ 0.002} &  0.478 $\pm$ 0.014 \\
$G^{\textrm{ATOM}}$ &           0.036 $\pm$ 0.014 &           0.036 $\pm$ 0.011 &  0.219 $\pm$ 0.009 &  0.359 $\pm$ 0.436 &  \textbf{0.023 $\pm$ 0.002} &  0.479 $\pm$ 0.014 \\
$A$ &           0.002 $\pm$ 0.002 &           0.001 $\pm$ 0.001 &  0.017 $\pm$ 0.034 &  0.012 $\pm$ 0.022 &  \textbf{0.000 $\pm$ 0.000} &  0.033 $\pm$ 0.013 \\
$B$ &           0.083 $\pm$ 0.047 &  \textbf{0.079 $\pm$ 0.033} &  0.280 $\pm$ 0.354 &  0.264 $\pm$ 0.347 &           0.169 $\pm$ 0.206 &  0.205 $\pm$ 0.220 \\
$C$ &  \textbf{0.062 $\pm$ 0.005} &           0.176 $\pm$ 0.231 &  0.482 $\pm$ 0.753 &  0.470 $\pm$ 0.740 &           0.321 $\pm$ 0.507 &  0.368 $\pm$ 0.525 \\
\bottomrule
\end{tabular}
    }
    \label{tab:qm9_split}
\end{table*}

\section{Additional Experiments}

\vspace{2pt}\noindent\textbf{Quantification of the enzyme class exchange preferences}
We quantify the empirically observed enzyme class exchange preferences of \cite{cuesta_classification_2015} and the class exchange preferences inferred from LEGS-FIXED, LEGS-FCN, and a GCN in Table~\ref{tab:equant}. We measure the cosine distance between the graphs represented by the chord diagrams in Figure~\ref{fig:ribbon}. As before, the self-affinities were discarded. LEGS-Fixed reproduces the exchange preferences the best, but LEGS-FCN still reproduces the observed exchange preferences well and has significantly better classification accuracy.

\vspace{2pt}\noindent\textbf{QM9 Target Breakdown}~\cite{gilmer_neural_2017, wu_moleculenet_2018} contains graphs that each represent chemicals with \~18 atoms. Regression targets represent chemical properties of the molecules. These targets are respectively, the dipole moment $\mu$, the isotropic polarizability $\alpha$, the highest occupied molecular orbital energy $\epsilon_{\textrm{HOMO}}$, the lowest unoccupied molecular orbital energy $\epsilon_{\textrm{LUMO}}$, the difference $\Delta \epsilon = \epsilon_{\textrm{HOMO}} - \epsilon_{\textrm{LUMO}}$, the electronic spatial extent, $\langle R^2 \rangle$, the zero point vibrational energy $\textrm{ZPVE}$, the internal energy at 0K $U_0$, the internal energy $U$, the enthalpy $H$, the free energy $G$, and the heat capavity $c_{\textrm{v}}$ at 25C, the atomization energy at 0K $U_0^{\textrm{ATOM}}$ and 25C $U^{\textrm{ATOM}}$, the atomization enthalpy $H^{\textrm{ATOM}}$ and free energy $G^{\textrm{ATOM}}$ at 25C, and three rotational constants $A, B, C$ measured in gigahertz. For more information see references~\cite{gilmer_neural_2017, wu_moleculenet_2018, fey_fast_2019}. In Table~\ref{tab:qm9_split} we split out performance by target. GIN performs slightly better on the molecule energy targets both overall and atomization targets $U_0$, $U$, $H$, and $G$ where both LEGS and GIN significantly outperform the other models GCN, GraphSAGE and the structure invariant baseline. On all other targets, and especially the more difficult targets (measured by baseline MSE) the LEGS module performs the best or near to the best.

\end{document}